\theoremstyle{plain}
\newtheorem{theorem}{Theorem}[section]
\theoremstyle{definition}
\theoremstyle{remark}
\newcommand\restr[2]{{
  \left.\kern-\nulldelimiterspace 
  #1 
  \littletaller 
  \right|_{#2} 
  }}
\newcommand{\littletaller}{\mathchoice{\vphantom{\big|}}{}{}{}}
\icmltitlerunning{Rigid Body Flows for Sampling Molecular Crystal Structures}
\newcommand{\change}[1]{#1}
\begin{document}

\twocolumn[
\icmltitle{Rigid Body Flows for Sampling Molecular Crystal Structures}



\icmlsetsymbol{equal}{*}

\begin{icmlauthorlist}
\icmlauthor{Jonas Köhler}{equal,msr,fucs}
\icmlauthor{Michele Invernizzi}{equal,fucs}
\icmlauthor{Pim de Haan}{qc,uva}
\icmlauthor{Frank Noé}{msr,fucs,fuph,ruch}
\end{icmlauthorlist}

\icmlaffiliation{msr}{Microsoft Research AI4Science}
\icmlaffiliation{fucs}{Freie Universität Berlin, Department of Mathematics and Computer Science}
\icmlaffiliation{fuph}{Freie Universität Berlin, Department of Physics}
\icmlaffiliation{ruch}{Rice University, Department of Chemistry}
\icmlaffiliation{qc}{Qualcomm AI Research, an initiative from Qualcomm Technologies, Inc.}
\icmlaffiliation{uva}{University of Amsterdam}

\icmlcorrespondingauthor{Frank, Noé}{franknoe@microsoft.com}

\icmlkeywords{normalizing flows, Boltzmann generators, molecular crystals, molecular dynamics, free energy, generative models}

\vskip 0.3in
]



\printAffiliationsAndNotice{\icmlEqualContribution} 

\begin{abstract}
Normalizing flows (NF) are a class of powerful generative models that have gained popularity in recent years due to their ability to model complex distributions with high flexibility and expressiveness.
In this work, we introduce a new type of normalizing flow that is tailored for modeling positions and orientations of multiple objects in three-dimensional space, such as molecules in a crystal. 
Our approach is based on two key ideas: first, we define smooth and expressive flows on the group of unit quaternions, which allows us to capture the continuous rotational motion of rigid bodies; second, we use the double cover property of unit quaternions to define a proper density on the rotation group.
This ensures that our model can be trained using standard likelihood-based methods or variational inference with respect to a thermodynamic target density.
We evaluate the method by training Boltzmann generators for two molecular examples, namely the multi-modal density of a tetrahedral system in an external field and the ice XI phase in the TIP4P water model. Our flows can be combined with flows operating on the internal degrees of freedom of molecules, and constitute an important step towards the modeling of distributions of many interacting molecules.
\end{abstract}

\section{Introduction}

\begin{figure}[t]
    \centering
    \includegraphics[width=\linewidth]{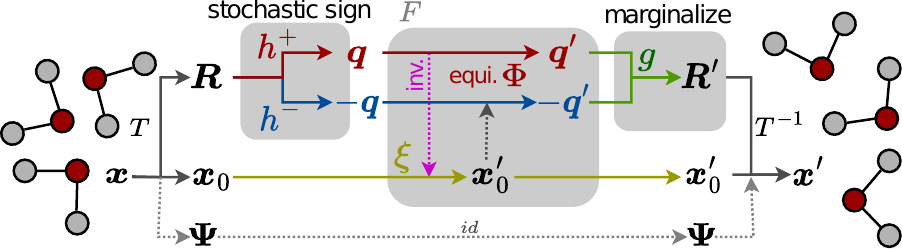}
    \caption{Flow on SO(3) via the $S^3$ double cover: we first transform a system of Cartesian coordinates $\bm x$ via $T$ into the triple $(\bm x_{0}, \bm R, \bm \Psi)$, containing the global translation $\bm x_{0}$, the global rotation $\bm R$ and its fixed inner degrees of freedom $\bm \Psi$ (e.g., bond lengths and inner angle of water molecules). While keeping $\bm \Psi$ fixed we transform the poses as follows: first map $\bm R$ onto one of the two representing quaternions $\bm q$ or $-\bm q$ stochastically (here we picked $\bm q$ as indicated in red). Then transform $(\bm x_{0}, \bm q)$ into $(\bm x'_{0}, \bm q')$ using a sign-flip equivariant coupling flow $F$ made of position updates $\xi$ and rotation updates $\Phi$. As the double cover projection $g$ maps both $\bm q'$ and $-\bm q'$ to the same rotation element $\bm R'$, transforming back corresponds to a simple marginalization over both stochastic paths. We finally obtain $\bm x'$ using the inverse rigid body transform $T^{-1}$.}
    \label{fig:double-cover-method}
\end{figure}

Normalizing flows (NF) \cite{tabak2010density, rezende2015variational, papamakarios2019normalizing} are popular deep learning generative models, that have been applied to the physical sciences in a variety of ways, such as for sampling lattice models \cite{nicoli2020asymptotically, nicoli2021estimation, li2018neural, boyda2021sampling, albergo2019flow}, approximating the equilibrium density of molecular systems \cite{noe2019boltzmann, kohler2020equivariant, wu2020snf, xu2021learning}, and estimating free energy differences \cite{wirnsberger2020targeted, ding2021deepbar}.
\change{
This success is motivated by the fact that, contrary to other generative models, NF are built to provide an efficient reweighting scheme that allows for exact sampling from a given energy-based probability distribution.
In this work we present a novel NF architecture that is particularly suited to sample molecular crystals, i.e. periodic systems where multiple copies of the same molecule are arranged in a lattice structure.
Molecular crystals are of key importance for several applications, ranging from the pharmaceutical industry to solar energy production \cite{bernstein2020polymorphism}.
For this reason, there is great interest in developing efficient and effective methods to predict the physical properties of molecular crystals through computer simulations rather than expensive laboratory experiments.
}

\paragraph{\change{Normalizing flows for equilibrium sampling and free-energy difference estimation}}
In this work, we focus on methods that have a primary application in the field of Boltzmann Generators (BG) \cite{noe2019boltzmann}. BGs are generative models that are trained to sample conformations of molecules in equilibrium. These follow a Boltzmann-type distribution, $\mu(\bm x) \propto \exp(-u(\bm x))$. Here $u$ is the dimensionless potential defined by the molecular system and the thermodynamic state in which it is simulated, such as the canonical ensemble at a certain temperature. 
BGs \change{are primarily implemented using NFs and as such} can be trained using a combination of maximum likelihood estimation on potentially biased data obtained from molecular dynamics (MD) simulations, and energy-based training via the reverse Kullback-Leibler (KL) divergence.
Once trained, BGs can be used for importance sampling \cite{noe2019boltzmann, Muller2019}, as efficient proposals in Markov chain Monte Carlo (MCMC) applications \cite{sbailo2021neural, Gabrie2022}, or as teacher models when learning coarse-grained MD force-fields \cite{koehler2023}.
\change{
An important advantage of BGs over traditional methods such as MD and MCMC, is that the latter struggle to efficiently sample systems characterized by long-lived metastable states separated by very low-probability transition regions.
Such multistable systems are ubiquitous, e.g. chemical reactions, conformational rearrangements in biomolecules, phase transitions in materials.
One of the most important property for these systems, is the relative stability of their metastable states, i.e. their free energy difference.
Boltzmann generators and NF can be used to directly connect such metastable states and compute the free energy difference \cite{wirnsberger2020targeted,Rizzi2021,invernizzi2022}.
}

\paragraph{Building flows on natural molecular representations}

Despite their potential, previous flow architectures for molecular systems have severe limitations. Most importantly, many high-fidelity models rely on representations that are either non-scalable (e.g., global internal coordinates \cite{kohler2021smooth, koehler2023, invernizzi2022}), non-transferable (e.g., principal components \cite{noe2019boltzmann}), or unnatural (e.g., splits between Cartesian axes \cite{wirnsberger2020targeted, wirnsberger2022}).
Equivariant all-atom representations, while more principled, require computationally intensive and approximate methods such as solving neural ODEs \cite{kohler2020equivariant,satorras2021n} and can be challenging to scale and integrate with energy-based training.

A more natural and scalable representation would place atoms relative to the orientation and position of a chemical entity such as the molecule or residue, thus separating inter-molecular degrees of freedom and intra-molecular placement of atoms relative to the pose.
This becomes especially important in solvated systems and molecular crystals, where the most interesting emergent properties result from inter-molecular interactions, whereas intra-molecular degrees of freedom vary often predictably. 

Additionally, when simulating such systems, it is common practice to fix the stiffest internal degrees of freedom, typically inter-atomic bonds and angles, as it allows for larger integration time steps and thus faster mixing without giving up much accuracy.
Yet, if rigid residues are present in a simulation, the molecular density $\mu$ becomes non-singular only for a sub-manifold of the full space. Thus, any NF approach modeling all degrees of freedom can neither be reweighted against such a singular target density nor can it be trained by minimizing the reverse KL divergence.

Transforming the molecular pose independently from inner degrees of freedom requires a physically meaningful normalizing flow architecture on the pose manifold that scales to systems composed of many interacting poses.


\paragraph{Contributions}
Here, we present such an approach to designing normalizing flows for sampling the joint distribution over positions and orientations of systems composed of many molecules. As handling the internal degrees of freedom separately was extensively studied in prior work, such as \cite{kohler2021smooth}, we focus on the important limit case of purely rigid bodies in this work.
Our flow architecture is ideal for molecular simulation applications due to its unique traits, including:
\begin{itemize}
    \item They prescribe fully smooth densities on the rigid body sub-manifold. The smoothness of the flow density has shown to be critical for faithful modeling of physical force fields \cite{kohler2021smooth, koehler2023}.
    \item They are compatible with permutation equivariant architectures, such as used in \citet{wirnsberger2022}. This feature has been shown to be critical when scaling flows to larger bulk systems, e.g., extended crystals or water boxes.
    \item External pose and inner degrees of freedom are treated independently. As such they are fully compatible with prior work focusing on modeling the internal degrees of freedom separately.
\end{itemize}

As part of the method we further contribute two new smooth flow architectures for the rotation manifold $SO(3)$, namely symmetrized Moebius transformations and projective convex gradient maps. We finally demonstrate the efficacy and efficiency of the method by sampling the multi-modal density of a tetrahedral body in an external field, as well as sampling ice XI crystals following the TIP4P water model at different sizes and temperatures with high accuracy.

\section{Related Work}

\paragraph{Normalizing flows on manifolds}
Normalizing flows on manifolds have been extensively studied for Riemannian geometry, e.g., in the form of convex potential flows \cite{cohen2021, rezende2021implicit} or neural ODEs \cite{chen2018neural} on manifolds
\cite{lou2020neural, Katsmann2021, mathieu2020riemannian, falorsi2021continuous, Ben-Hamu2022}.
Approaches to smooth coupling flows on non-trivial manifolds, like tori and spheres, were discussed in \citet{rezende2020normalizing, kohler2021smooth}.
Beyond that there exist approaches to non-smooth normalizing flows on Riemannian submanifolds via charts \cite{gemici2016normalizing,kalatzis2021density}. Using the double cover with normalizing flows to estimate densities of single poses in the context of computer vision was concurrently discussed in the work of \citet{liu2023delving}. We discuss the relation of this concurrent work to the present work in Sec. \ref{sec:flip-equi-diffeos}.

\paragraph{Density estimation on SO(3)}
Beyond flows other methods for neural density estimation on $SO(3)$ have been proposed for domains outside of molecular physics: \citet{falorsi2019reparameterizing} discussed using the exponential map to push-forward densities on the Lie-algebra $\mathfrak{so}(3)$ to $SO(3)$. Furthermore, \citet{murphy2021} proposed single pose estimation using the double cover via an implicit neural representation on $S^3$. 

\paragraph{Sampling equilibrium structures with normalizing flows}

Normalizing flows for sampling molecular systems were studied in the context of importance sampling \cite{wu2020snf, kohler2020equivariant, Dibak2021, kohler2021smooth, Midgley2022} and estimating free energy differences \cite{noe2019boltzmann, wirnsberger2020targeted, ding2021deepbar, Rizzi2021, wirnsberger2022,invernizzi2022, Ahmad2022, Coretti2022}. Furthermore, \citet{satorras2021n} used NF for generating conformations across molecular space, however only focusing on density estimation without explicit treatment of the thermodynamics.

\paragraph{Sampling molecular crystals without machine learning}
Established methods to sample molecular crystals typically rely on molecular dynamics or Markov chain Monte Carlo simulations \cite{FrenkelBook}.
Several protocols have been proposed to compute free energy difference and phase diagrams for molecular crystals, one of the most popular being thermodynamic integration \cite{Frenkel1984,Vega2008}.
Relevant to our purposes is the targeted free energy perturbation method \cite{Jarzynski2002} that has been combined with the multistate Bennett acceptance ratio (MBAR) \cite{Shirts2008mbar} to compute crystal free energies \cite{Schieber2018, Schieber2019}.

\section{Theory \& Method}

\change{
In molecular equilibrium sampling we are provided with a (dimensionless) potential function
\begin{align}
    u(\bm x) \colon \mathbb{R}^{N \times 3} \rightarrow \mathbb{R}
\end{align}
describing the probability density of samples $\mu(\bm x)$ in thermodynamic equilibrium, via the relation
\begin{align}
    \mu(\bm x) = \frac{\exp(-u(\bm x))}{Z},
\end{align}
where $Z$ is the unknown normalizing constant (partition function) of the system.

In the more general case, we further assume a parametric ensemble of potentials $u_{\alpha}$ with corresponding densities $\mu_{\alpha}$, where each $\alpha$ corresponds to a different thermodynamic state, e.g., pressure or temperature.

Primary goals are now
\begin{enumerate}
    \item \label{bullet:goal-one} drawing asymptotically unbiased i.i.d. samples from $\mu$ from which we can estimate expectation values of downstream observables, as well as,
    \item estimating the log-ratio 
    \begin{align}
    \Delta F = -\log\left(Z_{\alpha_1} / Z_{\alpha_{0}} \right)
    \end{align}
    between the partition functions $Z_{\alpha_{0}}$ and $Z_{\alpha_{1}}$ of two thermodynamic states $\alpha_{0}$ and $\alpha_{1}$. This quantity is also called \emph{free energy difference} between $u_{\alpha_{0}}$ and $u_{\alpha_{1}}$ and is an important measure telling which state is more stable among different thermodynamic conditions.
\end{enumerate}

Previous work \cite{noe2019boltzmann, wirnsberger2020targeted} has shown that NFs are a natural choice for such tasks, as they allow to formulate the sampling problem relative to a tractable base density, while providing asymptotic guarantees on unbiasedness.

NF approximate a target density $\mu$ by a parametric diffeomorphic map, $\Phi(\cdot; \bm \theta)$, that transforms samples from a base density, $\bm z \sim p_{0}(\bm z)$, into samples that follow the push-forward density, 

\begin{align}
\Phi_{*}(p_{0})(\bm x; \bm \theta) := p_{0}\left(\Phi^{-1}(\bm x; \bm \theta)\right) \left|\bm J^{-1}_{\Phi}(\bm x; \bm \theta) \right|
\end{align}

In standard applications, like density estimation on images, the parameters $\bm \theta$ are learned by minimizing the negative log-likelihood on data 
\begin{align}
    \label{eq:maximum-likelihood}
    \bm \theta_{ML} = \arg\min_{\theta} \mathbb{E}_{\bm x \sim \mu(\bm x)} \left[- \log \Phi_{*}(p_{0})(\bm x; \bm \theta)\right].
\end{align}

In the molecular sampling setup, however, samples from $\mu(\bm x)$ are usually sparse and biased (e.g. when obtained from non-converged MD simulations). As such, training can combine biased likelihood training with minimizing the reverse Kullback-Leibler divergence

\begin{align}
    \label{eq:reverse-kl-divergence}
    \bm \theta_{KL} = \arg\min_{\theta} D_{KL}\left[ \Phi_{*}(p_{0})(\cdot; \bm \theta) \| \mu(\cdot) \right].
\end{align}

If the base density $p_{0}$ is given by a simple density with closed-form sampling algorithm, e.g., an isotropic Gaussian, these model can be turned into asymptotically unbiased independence samplers using reweighing techniques, such as self-normalized importance sampling \cite{noe2019boltzmann}. These models where coined \emph{Boltzmann generators} in prior work and can be used to tackle goal \ref{bullet:goal-one}.

Alternatively, we could consider $p_{0} = \mu_{\alpha_{0}}$, for some reference potential $u_{\alpha_{0}}$ and learn the mapping to any other potential $u_{\alpha_{1}}$ via reverse KL minimization. This leads to the method of \emph{learned free energy perturbation} (LFEP) \cite{wirnsberger2020targeted} and allows to directly estimate \begin{align}
    \label{eq:free-energy-upper-bound}
    \Delta F \leq D_{KL}\left[ \Phi_{*}(\mu_{\alpha_0})(\cdot; \bm \theta) \| \mu_{\alpha_{1}}(\cdot) \right]
\end{align}
from above. In practice one can, e.g., run MD on $\mu_{\alpha_{0}}$ to obtain samples, train the flow using loss \ref{eq:reverse-kl-divergence}, and then estimate the bound \ref{eq:free-energy-upper-bound}.
}

\paragraph{Normalizing flows for rigid bodies}
\change{We now explain how this framework can be used when studying systems composed of rigid bodies}

Let us assume we have a system $\bm X \in \mathbb{R}^{N \times K \times 3}$ consisting of $N$ bodies, each consisting of $K$ beads in $\mathbb{R}^{3}$. In the present model, we assume that these bodies are rigid, i.e., we only sample the joint translation or rotation of the $K$ beads. However, combining this rigid body flow with a flow operating on internal degrees of freedom is conceptually straightforward.

Each rigid body $\bm x = (\bm x_{0}, \ldots, \bm x_{K-1})$ can equivalently be described as a triple $(\bm x_{0}, \bm R, \bm \Psi)$ of the position $\bm x_{0} \in \mathbb{R}^{3}$ of the first bead, a rotation matrix $\bm R \in SO(3) \subset \mathbb{R}^{3 \times 3}$, and $3K-6$ inner degrees of freedom $\bm \Psi$. 
An intuitive approach to modeling a diffeomorphism in this representation is keeping $\bm \Psi$ fixed and only describing transformations of $(\bm x_{0}, \bm R)$.

While modeling flows on $\mathbb{R}^{3}$ is clear, modeling smooth normalizing flows directly on $SO(3)$ is challenging and has pros and cons depending on the representation. 
Rotation matrices are the natural way to handle rotational degrees of freedom, but designing expressive normalizing flows can be difficult due to the orthonormality and sign constraint.
Euler angles on the other hand show the gimbal lock phenomenon leading to a non-smooth representation. They furthermore act nonlinearly and induce a volume change. 
Another drawback of working on $SO(3)$ directly is that equivariance under rotations is very difficult to incorporate into the map. As shown, e.g., in \citet{kohler2020equivariant, satorras2021n} this can become critical when scaling flows to larger physical systems. 
While there exist intrinsic manifold approaches, such as \citet{falorsi2021continuous, mathieu2020riemannian,lou2020neural}, those require expensive and possibly inexact numerical integration methods and as such are hard to scale. Furthermore, computing their exact density is usually avoided via stochastic approximation of the divergence term which in practice is not sufficient for accurate reweighting of molecular systems \cite{kohler2020equivariant}.

Here we give a formulation of normalizing flows for rigid bodies that are smooth, fast to compute and invert, compatible with equivariance constraints, and provide a tractable exact density.

\subsection{Flows on $SO(3)$ via the $S^3 \rightarrow SO(3)$ double cover.}

Instead of working on $SO(3)$ directly, we can also model rotations via the group of \textit{unit quaternions}, also known as SU(2), i.e., the set of unit vectors $\bm q \in S^3 \subset \mathbb{R}^4$ equipped with the quaternion product. We denote the quaternion product between two quaternions $\bm q_{1}$ and $\bm q_{2}$ as $\bm q_{1} \odot \bm q_{2}$,  and the conjugation of $\bm q$ as $\bm q^*$.
Let $\iota \colon \mathbb{R}^3 \hookrightarrow \mathbb{R}^{4}$ be the canonical embedding where we embed a point $\bm x \in \mathbb{R}^{3}$ as a purely imaginary quaternion $(x_{0}, x_{1}, x_{2}, 0)$, and $\pi \colon \mathbb{R}^{4} \twoheadrightarrow \mathbb{R}^3$ be the corresponding projection, such that $\pi \circ \iota = \mathrm{id}$. For any $\bm q \in S^3$ the map $\bm R_{\bm q} \colon \mathbb{R}^{3} \rightarrow \mathbb{R}^{3}, \bm x \mapsto \bm q . \bm x := \pi\left( \bm q \odot \iota(\bm x) \odot \bm q^{*} \right) $ is a rotation of $\bm x$ around the origin. This defines a smooth map $g \colon S^{3} \rightarrow SO(3), \bm q \mapsto \bm R_{\bm q}$. Furthermore, $g$ is surjective and as such, each rotation $\bm R$ can be represented by some $\bm q \in g^{-1}(\bm R)$. However, $g$ is not injective as we have $\bm q.\bm x = (- \bm q). \bm x$. In fact, $g$ forms a covering map, i.e., for each $\bm R \in SO(3)$ there is an open neighborhood $U_{\bm R}$, such that $g^{-1}(U_{\bm R}) \approx U_{\bm R} \times \mathbb{Z}_{2}$.
Furthermore for each $\bm R$ there are locally defined smooth functions $h_{\bm R}^{+}, h_{\bm R}^{-} \colon U_{\bm R} \rightarrow S^3$, such that, $g \circ h_{\bm R}^{+} = g \circ h_{\bm R}^{-} = \mathrm{id}_{U_{\bm R}}$ and $h_{\bm R}^{+} = - h_{\bm R}^{-}$. Abusing notation we will abbreviate $\bm q(\bm R) := h_{\bm R}^{+}(\bm R)$ and $-\bm q(\bm R) := h_{\bm R}^{-}(\bm R)$ in the following discussion.

\paragraph{Stochastic paths over the double cover}
\label{sec:smooth-rigid-body-flows}

We can leverage this smooth double cover to design smooth flows for rigid bodies in the following way (see Fig. \ref{fig:double-cover-method}):
\begin{itemize}
    \item We first transform $\bm x$ into its rigid representation $(\bm x_0, \bm R, \bm \Psi)$. This can be done, e.g., by computing the pose $(\bm x_0, \bm R)$, removing it from $\bm x$, and then computing $\bm \Psi$ in this standard frame. We keep $\bm \Psi$ fixed and write the transformation relative to $\bm \Psi$ as $T_{\bm \Psi} \colon \bm x \mapsto (\bm x_0, \bm R)$.
    \item We then stochastically embed $\bm R$ as either $\bm q(\bm R)$ or $-\bm q(\bm R)$ with equal probability. This results in two possible paths through the transformation.
    \item Given a diffeomorphism $F \colon \mathbb{R}^{3} \times S^{3} \rightarrow \mathbb{R}^{3} \times S^{3}$ we transform $F(\bm x_{0}, \bm q) = (\bm x'_{0}, \bm q')$.
    \item Now by using the double cover $g$, we obtain $\bm R' = g(\bm q')$. It is important to note, that due to the double cover property, we would also obtain $\bm R' = g(-\bm q')$.
    \item We can then invert the rigid body transform to get $(\bm x'_{0}, \bm R', \bm \Psi) \mapsto \bm x'$ by using $T^{-1}_{\bm \Psi}$
\end{itemize}
As we explain in the following paragraph, explicit sampling the sign of $\bm q(\bm R)$ is not even necessary and you can choose either sign arbitrarily. 
\change{For completeness, we note that the lift from SO(3) to $S^3$ could be implemented by any stochastic function, as long as it remains the identity when composed with the projection map. This includes deterministic sign choices as the limiting case. We elaborate on that in appendix \ref{a:covering_flows}.}

\paragraph{Volume change induced by the transformation}
Now let us equip the inputs $\bm x$ with a base density $p_{0}$. Furthermore, consider the case where $F$ is invariant under sign flips of $\bm q$ in the first coordinate and equivariant in the second, i.e., we assume that for all $\bm x_{0} \in \mathbb{R}^{3}, \bm q \in S^{3}$:
\begin{align}
    F(\bm x_{0}, \bm q) = (\bm x_0', \bm q')
    \Rightarrow F(\bm x_{0}, -\bm q) = (\bm x_0', - \bm q').
\end{align}
In this situation, we can compute the total volume change induced by the transformation 
\begin{align}
    \bm x \rightarrow (\bm x_{0}, \pm \bm q(\bm R)) \rightarrow (\bm x'_{0}, \bm q') \rightarrow\bm x',
\end{align}
independent of the choice of path, as 
\begin{align}
    \label{eq:volume-change}
   \left|\bm J_{\bm x \rightarrow \bm x'}(\bm x) \right| &= \left|\bm J_{F}(\bm x_{0}, \bm q(\bm R)) \right| = \left|\bm J_{F}(\bm x_{0}, -\bm q(\bm R)) \right|.
\end{align}
This stems from the fact, that the volume contribution of each path is identical and that each path produces exactly the same rotation element at its end. Additionally, the volume change introduced by $T_{\bm \Psi}$ and $T^{-1}_{\bm \Psi}$ cancels out. We give a formal derivation of this transformation law in appendix \ref{a:volume-change-on-manifolds}.

\paragraph{Constructing a flip-symmetric map}

Given a class of flip-equivariant diffeomorphisms $\Phi(\cdot; \bm \theta) \colon S^{3} \rightarrow S^{3}$ and arbitrary diffeomorphisms $\xi(\cdot; \bm \theta) \colon \mathbb{R}^{3} \rightarrow \mathbb{R}^{3}$ we can construct $F$ via the coupling layers \cite{dinh2014nice, dinh2016rnvp}:
\begin{align}
    \bm x_{0}' = \xi(\bm x_{0}; \bm \theta_{\xi}(\bm q)) \qquad \bm q' = \Phi(\bm q; \bm \theta_{\Phi}(\bm x_{0}')).
    \label{eq:coupling-layers}
\end{align}
This map is flip-symmetric according to the previous paragraph as long as we assert that $\bm \theta_{\xi}$ is a flip-invariant conditioning map. We can efficiently invert $F$ and evaluate its volume change according to Eq. \eqref{eq:volume-change}, whenever $\xi$ and $\Phi$ are easy to invert and their change of volume can efficiently be computed. While there exist many candidates that could be chosen for $\xi$, it is less obvious how to design a suitable family $\Phi$.

\subsection{Flip-equivariant diffeomorphisms on $S^3$} \label{sec:flip-equi-diffeos}

As such, we introduce two classes of smooth and flip-equivariant diffeomorphisms on $S^d$ that can be used to realize $F$ in practice: symmetrized Moebius transforms and projective convex gradient maps.
While the first has analytic formulas to compute its volume change and inverse, it is less expressive if one aims to model very multi-modal target densities. As such we consider it as the flip-equivariant $S^3$ analog of the broadly used real-NVP \cite{dinh2016rnvp} layers.
The second requires more numerical effort to compute its inverse and volume change while being in principle arbitrarily expressive in modeling flip-symmetric multi-modal densities on $S^3$. We consider it as the flip-equivariant $S^3$ analog to recently introduced convex-potential flows \cite{huang2021convex}.

\paragraph{Symmetrized Moebius transforms}

A generalized Moebius transform on $S^d$ can be given by
\begin{align}
    \Phi_{M}(\bm p; \bm q)
    &= \bm p - 2 \mathrm{proj}_{\bm q - \bm p}(\bm p) \label{eq: moebius-projection}
\end{align}
with $\mathrm{proj}_{\bm u}(\bm v) = \frac{\bm v^T \bm u}{\|\bm u\|^2_2}\bm v$.
Whenever $\|\bm q\| < 1$ the map $\Phi_{M}(\cdot; \bm q)$ defines a diffeomorphism on $S^d$ \cite{rezende2020normalizing, Kato_2020}. We can see this using the following intuition: first, send a ray from $\bm p$ through $\bm q$ until it intersects $S^3$ again at some point $\bm p'$. Then mirror $\bm p'$ onto $-\bm p'$ to get the final result in \eqref{eq: moebius-projection}.
Each such $\Phi_{M}(\cdot; \bm q)$ defines an involution $\Phi_{M}(\cdot; \bm q) \circ \Phi_{M}(\cdot; \bm q) = \textrm{id}_{S^d}$.
Unfortunately, only $\Phi_{M}(\cdot; \bm 0) = \textrm{id}_{S^d}$ is (trivially) flip-equivariant. However we can use $\Phi_{M}$ to construct the following family of flip-equivariant maps \footnote{See \url{https://www.geogebra.org/m/j7gpwcnf} for an interactive animation.}:
\begin{align}
    \Phi_{SM}(\bm p; \bm q) = \frac{\Phi_{M}(\bm p; \bm q) + \Phi_{M}(\bm p; -\bm q)}{\| \Phi_{M}(\bm p; \bm q) + \Phi_{M}(\bm p; -\bm q) \|}.
\end{align}
If $\|\bm q\| < 1$ each $\Phi_{SM}(\cdot; \bm q)$ defines a diffeomorphism on $S^d$ with an analytic inverse. Furthermore, there exists an analytic formula to compute its induced change of volume (see appendix \ref{a:volume_moebius}).

\paragraph{Projective convex gradient maps}
Another construction can be given as follows. Let $\phi \colon \mathbb{R}^{d+1} \rightarrow \mathbb{R}$ be a strictly convex and smooth map, with minimizer $\bm 0$. Then the normalized gradient map $\Phi \colon S^{d} \rightarrow S^{d}$, given by
\begin{align}
    \Phi_{CG}(\bm x) = \frac{\nabla_{\bm x} \phi(\bm x)}{\|\nabla_{\bm x} \phi(\bm x)\|^2},
\end{align} defines a diffeomorphism on $S^d$ (see proof in appendix \ref{a:diff_convex_map}). Furthermore, if $\phi$ is flip-invariant, the resulting map $\Phi_{CG}$ will be flip-equivariant. While we could model $\phi$ with arbitrarily complex convex functions, e.g., deep input-convex neural networks \cite{amos2017}, this requires us to compute the inverse and induced volume change using numeric methods. For $S^{3}$ however, we can compute the volume change in closed form efficiently (see appendix \ref{a:volume_convex_map}).

\paragraph{Relation to \citet{liu2023delving}}

A concurrent approach to modeling flows on $SO(3)$ via the double cover was pursued in  \citet{liu2023delving}. Our work differs significantly in the following aspects:
\begin{itemize}
    \item While we study physical systems composed of multiple rigid bodies following a joint pose distribution, this prior work studies estimating a single pose in the context of computer vision tasks. 
    
    \item In order to describe flexible distributions on $SO(3)$ they introduce two flows. The first requires a fiber bundle construction within a specified frame to apply non-smooth spline flows. This construction is fundamentally incompatible with rotational equivariance when modeling multiple poses jointly. The choice of a frame together with the non-smoothness of the induced density makes the approach unsuitable for physical systems as studied in this work.
    \item The second flow introduced in \citet{liu2023delving} is an affine $S^3 \rightarrow S^3$ flow, and, as we show in appendix \ref{a:liu_convex_map}, is a special case of our projective convex gradient maps.
    \item Interleaving these two flows requires many changes of coordinates between the $SO(3)$ matrices and the $S^3$ double cover. On the other hand, our Moebius and the projective convex gradient map can model smooth complex multi-modal densities without ever leaving the double-cover construction.
\end{itemize}

\section{Experiments}
\begin{figure}[t]
    \centering
    \includegraphics[width=1.0\linewidth]{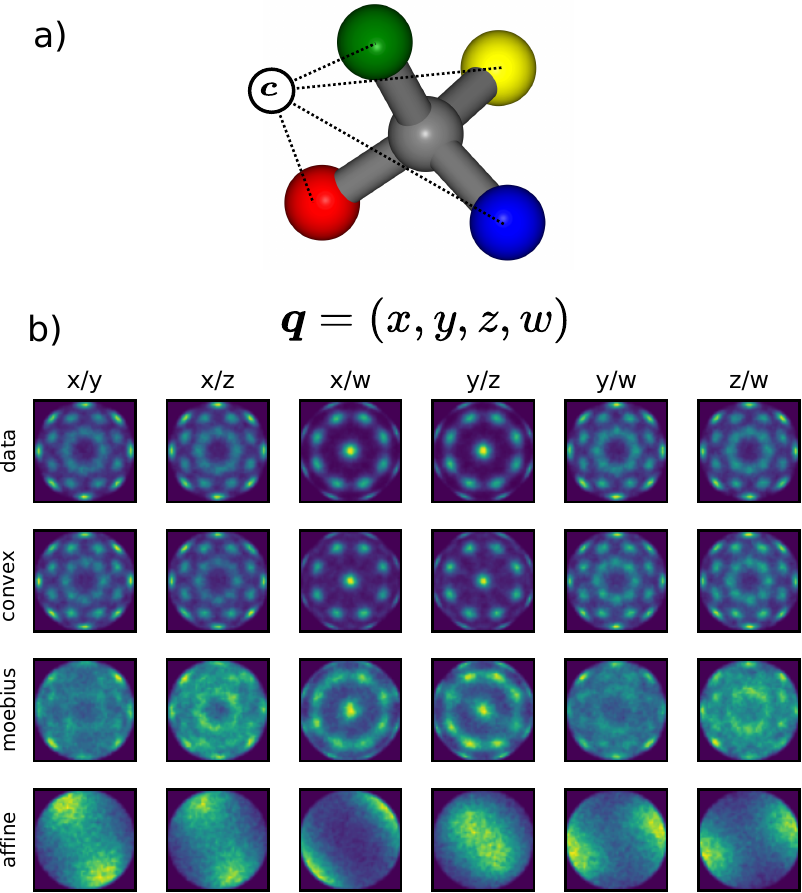}
    \caption{
    Tetrahedron in an external field: a) each colored bead is attracted with the same force towards the external point $\bm c$ according to the potential defined in Eq. \eqref{eq:toy-potential}. b) Density of rotational degree of freedom. Rotations are represented as unit-quaternions $\bm q = (x, y, z, w)$. First row: density of MD trajectory. Rows 2 to 4: densities of flows using projective convex gradient maps, symmetrized Moebius projections, and affine transformations, respectively.
    }
    \label{fig:tetrahedron-result}
\end{figure}
\begin{figure*}[t]
    \centering
    \includegraphics[width=\textwidth]{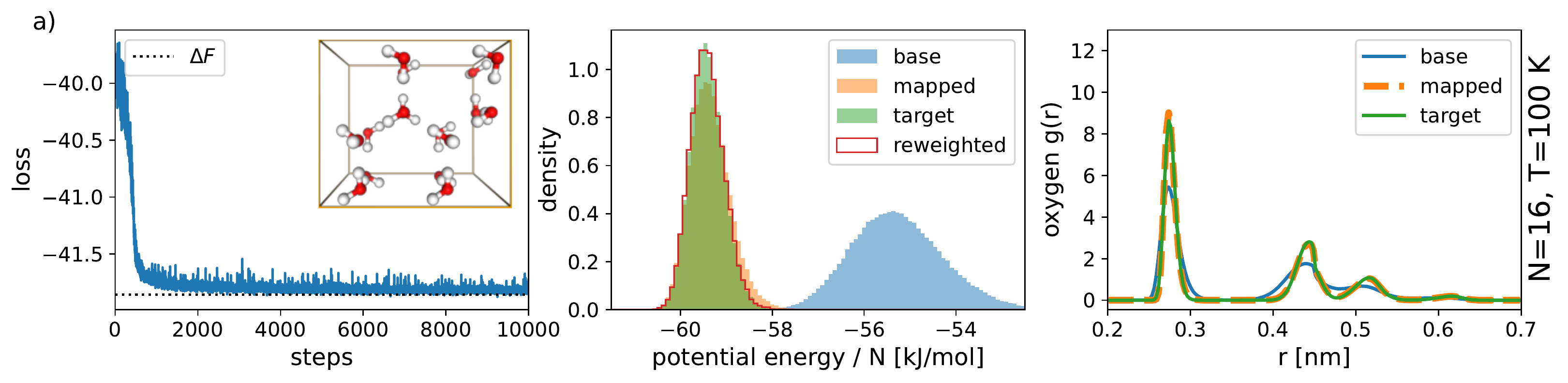}
    \includegraphics[width=\textwidth]{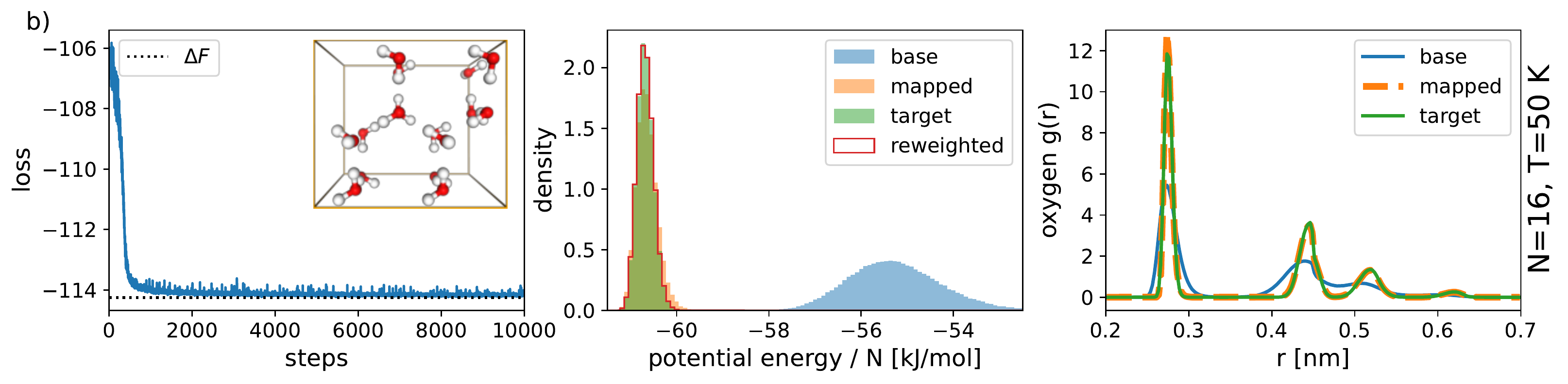}
    \includegraphics[width=\textwidth]{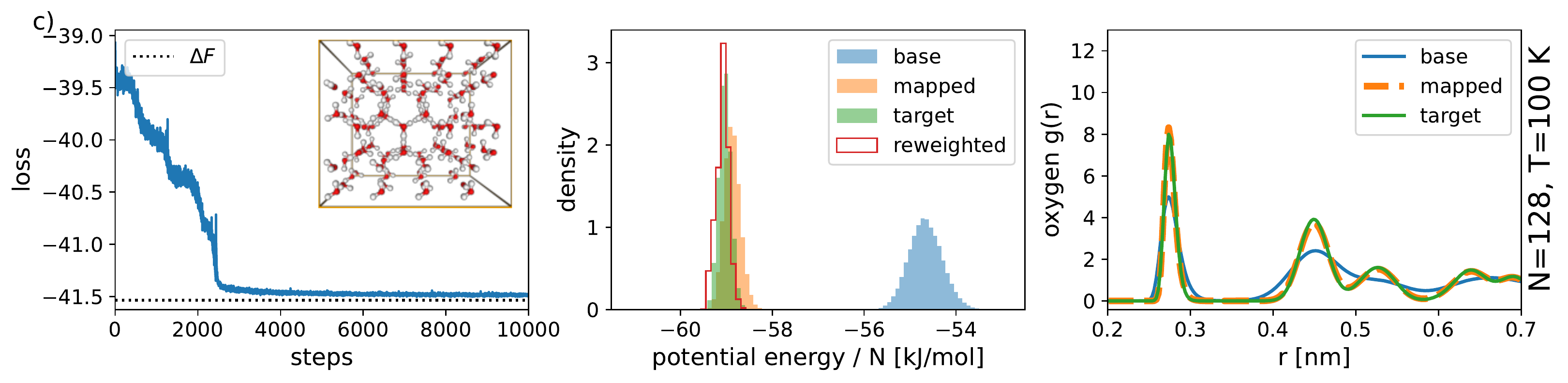}
    \caption{Results for the ice model for different target densities. 
    The temperature of the base density is always T$_0$ = 250~K, while the temperature T of the target is reported in the figure, as is the number of water molecules N.
    As expected, the loss approaches from above the reference per-molecule free energy difference $\Delta F$.
    The estimates reported in Table~\ref{table:df-results} are instead obtained by applying the LFEP estimator to the evaluation dataset.
    In all three cases, the map learned by the NF can transform the base density into the target one, as can be seen from the energy distribution and the oxygen-oxygen radial distribution function $g(r)$.
    The target distribution is not used for training, it is reported merely as a reference.
    }
    \label{fig:ice-results}
\end{figure*}

We show the efficacy of our method for rigid bodies in molecular physics by applying it to sampling two benchmark systems. The first system consists of a single rigid body following a very multi-modal density over the rotations. It serves as a benchmark to see how well the different flow architectures can express multi-modality.
The second system consists of an actual molecular crystal in different thermodynamic states. 

\subsection{Tetrahedron in external field}

Our first test system is given by a CH$_4$ (methane) molecule consisting of 5 atoms (see Fig. \ref{fig:toy-model-flow} a)). We keep internal degrees of freedom constant and fix the carbon to the origin so that only rotational motion is possible. This system interacts with an external field of the form 
\begin{align}
    \label{eq:toy-potential}
    u(\bm x) = C \cdot \sum_{k=1}^{5} \sum_{d=1}^{3}(x_{kd} - c_{d})^{4},
\end{align}
where $\bm c \in \mathbb{R}^{3}$ and $C > 0$ are control parameters.
If $\bm c \neq \bm 0$ there are multiple local minima. When sampled in equilibrium, e.g., using an MD simulation, this gives rise to a smooth and multi-modal density on the rotation manifold (see Fig. \ref{fig:tetrahedron-result} b) - top row).

As our goal is to model smooth densities on $SO(3)$ we compare the following three models on this system: the affine quaternion flow from \cite{liu2023delving} and the two transforms introduced in Sec. \ref{sec:flip-equi-diffeos}. 
Other flow models that could be considered are either not smooth, are not defined on the sub-manifold of interest, or do not possess a scalable way to compute exact densities and as such are not suitable for modeling molecular densities. Thus, we do not consider them in this comparison.


We generate an equilibrium dataset of 50,000 samples by sampling from $\mu(\bm x) \propto \exp(-u(\bm x))$ using OpenMM \cite{eastman2017openmm} and evaluate the different flow layers by their capability to match the multi-modal structure of its quaternion density when being trained by maximum-likelihood training.

Prior work \cite{huang2020augmented, koehler2023, chen2020vflow, wu2020snf, nielsen2020survae} showed that the expressivity of flows trained on multi-modal datasets can be increased by adding auxiliary Gaussian noise dimensions. We follow this approach and train each of the three tested flow methods on this data set accordingly by minimizing the variational bound to the negative log-likelihood until convergence (see appendix \ref{a:methane} for details on architecture and training).

Our results, as depicted in Fig. \ref{fig:tetrahedron-result}, show that projective convex gradient maps can faithfully reconstruct the multi-modal density of the rotations. While the symmetrized Moebius transforms are able to visibly resolve multiple modes, they clearly struggle with the strong multi-modality of the data. The affine quaternion layers can only represent a single mode and as such fail to represent the distribution faithfully. The parameterization of the projective convex potential can be critical for the expressivity of the flow. For brevity, we elaborate on this aspect in appendix \ref{a:methane_param}.

\subsection{Ice XI in the TIP4P water model}
As an example of a molecular crystal, we use water which, while being a simple molecule, is both of primary interest for MD simulations and exhibits highly nontrivial phase behavior \cite{Bore2022, Kapil2022}.
\change{ In this experiment, we aim to estimate the free energy difference $\Delta F$ between two different thermodynamic conditions, namely a reference temperature $T_{0}$ and a lower temperature $T$.}
For our simulations we investigate the hydrogen-ordered crystal phase of water, ice XI \cite{Matsumoto2021}, with the TIP4P-Ew rigid water model \cite{Horn2004}. We sample the canonical ensemble, thus fixed number of particles, volume, and temperature. This simple model system does not include quantum mechanical effects and cannot be expected to reproduce experimental measurements \cite{Abascal2005}. However, it is a useful setup to study the Boltzmann distributions of molecular crystals. 

The base density $\mu_{0}$ is given at temperature T$_{0}$ = $250\,$K, thus $u_{0}(\bm x) = (k_B T_0)^{-1}U(\bm x)$, where $k_B$ is the Boltzmann constant and $U(\bm x)$ is the TIP4P-Ew force-field energy. We then try to match the density of the same system at a different target temperature T. 
The quantity $\Delta F$ grows when the temperature gap increases, or when increasing the number of particles at a fixed temperature difference.
As such, we test our method for the following target potentials:
\begin{itemize}
    \item For a system composed of $N = 16$ water molecules we estimate $\Delta F$ for target temperatures T = $100\,$K and T = $50\,$K.
    \item For a system composed of $N = 128$ water molecules we estimate $\Delta F$ for a target temperature T = $100\,$K.
\end{itemize}

As reference, we compute the estimate of $\Delta F$ from MD simulations using the multistate Bennett acceptance ratio (MBAR) \cite{Shirts2008mbar} \change{(see appendix \ref{a:intro_mbar})}.
This method requires an overlap in phase space between the distributions that we want to calculate the free energy difference of. 
Thus we need to run multiple additional MD simulations at a ladder of intermediate temperatures which can quickly become expensive when $\Delta F$ is large \cite{invernizzi2022}.

By training a NF, we can instead use a single MD run to sample the base, and then use the LFEP estimator to compute $\Delta F$.
This can result in a considerable reduction of the computational cost, see appendix \ref{a:ice_cost}.
As proposed by \citet{Rizzi2021} we split the base MD run into two parts, one for training and the other one for the LFEP evaluation, to avoid systematic errors.
The used flow consists of coupling layers between positions and rotations according to Fig.~\ref{fig:double-cover-method}. We present it in detail in appendix \ref{a:ice_flow}.

\paragraph{Results}
The results are shown in Fig.~\ref{fig:ice-results}.
We show that we can achieve a close overlap of the energy distributions between the mapped density from our flow and the target density as obtained by reference MD simulations (second column). We can furthermore reweight the energies and the oxygen-oxygen radial distribution function to achieve nearly perfect overlap (third column).

The $\Delta F$ per molecule (thus divided by N) is reported in Table \ref{table:df-results}. We estimated the error via bootstrapping and report it given as two standard deviations. 
Other approaches like LBAR \cite{wirnsberger2020targeted, wirnsberger2022} could provide a more accurate estimate but would require samples from the target distribution which we do not assume to be available in our experiments.

\begin{table}[t]
\caption{Estimates of the free energy difference $\Delta F$ per molecule obtained with molecular dynamics (MBAR) and with our normalizing flow (LFEP).}
\label{table:df-results}
\vskip 0.15in
\begin{center}
\begin{small}
\begin{sc}
\begin{tabular}{lccc}
\toprule
Target &  MBAR & LFEP \\
\midrule
N=16, T=100~K    & -41.857 $\pm$ 0.007& -41.859 $\pm$ 0.002 \\
N=16, T=50~K & -114.251 $\pm$ 0.007& -114.252 $\pm$ 0.005 \\
N=128, T=100~K    & -41.535 $\pm$ 0.002& -41.534 $\pm$ 0.003 \\
\bottomrule
\end{tabular}
\end{sc}
\end{small}
\end{center}
\vskip -0.1in
\end{table}

\section{Discussion}

In this work, we presented a new approach to approximate the densities of multiple interacting molecules by modeling their positions and orientations using normalizing flows. A key element of this was a derivation of a smooth flow structure using the quaternion double cover and providing an efficient implementation via two categories of flip-equivariant flows on $S^3$. We furthermore demonstrated the effectiveness of this approach for modeling densities of molecular crystals by evaluating it on a multi-modal benchmark system and a range of ice systems.

\change{
We note that beyond the very important application to molecular crystals, rigid body flows could also become relevant in other domains, such as robotics, evidenced by related work like \citet{brehmer2023edgi}.
}

\paragraph{Limitations and possible extensions}

While the result for ice XI is promising and paves the way for many interesting applications of normalizing flows in the field of molecular crystals, a major challenge ahead is dealing with phase transitions or even going beyond the crystal phase to liquid and gas. However, these are still open problems even in the case of non-molecular systems, e.g., when monatomic crystals are modeled \citet{wirnsberger2022}. An interesting but nontrivial next step would be extending the present architecture with a flow model for the positions that can handle fluids and phase transitions.

A second aspect that we did not explore further in this work is exploiting the $SE(3)$ symmetry of jointly moving all rigid bodies.
\change{
Both introduced flow layers can easily be extended to fully rotation equivariant architectures by making the learnable functions $\xi, \theta_\xi, \theta_\Phi$ in Eq. \eqref{eq:coupling-layers} equivariant, with architectures such as EGNN \cite{satorras2021n}, NequIP \cite{batzner20223} or MACE \cite{batatia2022mace}. Such architectures can also compute pairwise interactions equivariant to jointly moving pairs of rigid bodies.}
Furthermore, many rigid bodies have internal symmetries, such as the mirror symmetry of the water molecule. For $N$ water molecules, this gives a symmetry group of order $2^N$. To scale to larger systems, built-in equivariance to this group may be necessary.

It is important to note that while here we only consider rigid-body molecules, the proposed flow architecture can be straightforwardly extended to incorporate the internal degrees of freedom of the molecules. This is an important aspect, as it is essential to handle larger molecules or more accurate force fields.

Finally, recent work of \citet{Abbott2022} raised questions about the scaling limits of normalizing flows when sampling physical potentials in lattice physics. 
Although such a study has not yet been carried out for molecular systems, it will be important to understand how this result relates to the sampling of molecular crystals and whether flow-based approaches can be reliably and efficiently scaled to much larger systems.

\section*{Software and Data}
All the code used to obtain the results is available at \url{https://github.com/noegroup/rigid-flows}.


\section*{Acknowledgements}
We thank Andreas Krämer for his invaluable editorial support in preparing this version of the manuscript and for his insightful advice. We furthermore thank Maaike Galama for helpful discussions about MBAR.

J.K and F.N. acknowledge funding by DFG CRC1114 Project B08, DFG RTG DAEDALUS, ERC consolidator grant 772230. M.I. acknowledges support from the Humboldt Foundation for a Postdoctoral Research Fellowship.




\bibliography{example_paper}

\begin{thebibliography}{75}
\providecommand{\natexlab}[1]{#1}
\providecommand{\url}[1]{\texttt{#1}}
\expandafter\ifx\csname urlstyle\endcsname\relax
  \providecommand{\doi}[1]{doi: #1}\else
  \providecommand{\doi}{doi: \begingroup \urlstyle{rm}\Url}\fi

\bibitem[Abascal et~al.(2005)Abascal, Sanz, Garc{\'i}a~Fern{\'a}ndez, and
  Vega]{Abascal2005}
Abascal, J. L.~F., Sanz, E., Garc{\'i}a~Fern{\'a}ndez, R., and Vega, C.
\newblock A potential model for the study of ices and amorphous water:
  {{TIP4P}}/{{Ice}}.
\newblock \emph{The Journal of Chemical Physics}, 122\penalty0 (23):\penalty0
  234511, 2005.

\bibitem[Abbott et~al.(2022)Abbott, Albergo, Botev, Boyda, Cranmer, Hackett,
  Matthews, Racani{\`e}re, Razavi, Rezende, {Romero-L{\'o}pez}, Shanahan, and
  Urban]{Abbott2022}
Abbott, R., Albergo, M.~S., Botev, A., Boyda, D., Cranmer, K., Hackett, D.~C.,
  Matthews, A. G. D.~G., Racani{\`e}re, S., Razavi, A., Rezende, D.~J.,
  {Romero-L{\'o}pez}, F., Shanahan, P.~E., and Urban, J.~M.
\newblock Aspects of scaling and scalability for flow-based sampling of lattice
  {{QCD}}.
\newblock \emph{arXiv preprint arXiv:2211.07541}, 2022.

\bibitem[Ahmad \& Cai(2022)Ahmad and Cai]{Ahmad2022}
Ahmad, R. and Cai, W.
\newblock Free energy calculation of crystalline solids using normalizing
  flows.
\newblock \emph{Modelling and Simulation in Materials Science and Engineering},
  30\penalty0 (6):\penalty0 065007, 2022.

\bibitem[Albergo et~al.(2019)Albergo, Kanwar, and Shanahan]{albergo2019flow}
Albergo, M., Kanwar, G., and Shanahan, P.
\newblock Flow-based generative models for markov chain monte carlo in lattice
  field theory.
\newblock \emph{Physical Review D}, 100\penalty0 (3):\penalty0 034515, 2019.

\bibitem[Amos et~al.(2017)Amos, Xu, and Kolter]{amos2017}
Amos, B., Xu, L., and Kolter, J.~Z.
\newblock Input convex neural networks.
\newblock In Precup, D. and Teh, Y.~W. (eds.), \emph{Proceedings of the 34th
  International Conference on Machine Learning}, volume~70 of \emph{Proceedings
  of Machine Learning Research}, pp.\  146--155. PMLR, 2017.

\bibitem[Batatia et~al.(2022)Batatia, Kovacs, Simm, Ortner, and
  Csanyi]{batatia2022mace}
Batatia, I., Kovacs, D.~P., Simm, G. N.~C., Ortner, C., and Csanyi, G.
\newblock {MACE}: Higher order equivariant message passing neural networks for
  fast and accurate force fields.
\newblock In Oh, A.~H., Agarwal, A., Belgrave, D., and Cho, K. (eds.),
  \emph{Advances in Neural Information Processing Systems}, 2022.

\bibitem[Batzner et~al.(2022)Batzner, Musaelian, Sun, Geiger, Mailoa,
  Kornbluth, Molinari, Smidt, and Kozinsky]{batzner20223}
Batzner, S., Musaelian, A., Sun, L., Geiger, M., Mailoa, J.~P., Kornbluth, M.,
  Molinari, N., Smidt, T.~E., and Kozinsky, B.
\newblock E(3)-equivariant graph neural networks for data-efficient and
  accurate interatomic potentials.
\newblock \emph{Nature Communications}, 13\penalty0 (1):\penalty0 2453, 2022.

\bibitem[Ben-Hamu et~al.(2022)Ben-Hamu, Cohen, Bose, Amos, Nickel, Grover,
  Chen, and Lipman]{Ben-Hamu2022}
Ben-Hamu, H., Cohen, S., Bose, J., Amos, B., Nickel, M., Grover, A., Chen, R.
  T.~Q., and Lipman, Y.
\newblock Matching normalizing flows and probability paths on manifolds.
\newblock In Chaudhuri, K., Jegelka, S., Song, L., Szepesvari, C., Niu, G., and
  Sabato, S. (eds.), \emph{Proceedings of the 39th International Conference on
  Machine Learning}, volume 162 of \emph{Proceedings of Machine Learning
  Research}, pp.\  1749--1763. PMLR, 2022.

\bibitem[Bernstein(2020)]{bernstein2020polymorphism}
Bernstein, J.
\newblock \emph{{Polymorphism in Molecular Crystals}}.
\newblock Oxford University Press, 2020.

\bibitem[Blondel et~al.(2022)Blondel, Berthet, Cuturi, Frostig, Hoyer,
  Llinares-L{\'o}pez, Pedregosa, and Vert]{jaxopt_implicit_diff}
Blondel, M., Berthet, Q., Cuturi, M., Frostig, R., Hoyer, S.,
  Llinares-L{\'o}pez, F., Pedregosa, F., and Vert, J.-P.
\newblock Efficient and modular implicit differentiation.
\newblock \emph{Advances in neural information processing systems},
  35:\penalty0 5230--5242, 2022.

\bibitem[Bore et~al.(2022)Bore, Piaggi, Car, and Paesani]{Bore2022}
Bore, S.~L., Piaggi, P.~M., Car, R., and Paesani, F.
\newblock Phase diagram of the {{TIP4P}}/{{Ice}} water model by enhanced
  sampling simulations.
\newblock \emph{The Journal of Chemical Physics}, 157\penalty0 (5):\penalty0
  054504, 2022.

\bibitem[Boyda et~al.(2021)Boyda, Kanwar, Racani\`ere, Rezende, Albergo,
  Cranmer, Hackett, and Shanahan]{boyda2021sampling}
Boyda, D., Kanwar, G., Racani\`ere, S., Rezende, D.~J., Albergo, M.~S.,
  Cranmer, K., Hackett, D.~C., and Shanahan, P.~E.
\newblock Sampling using $\mathrm{SU}(n)$ gauge equivariant flows.
\newblock \emph{Phys. Rev. D}, 103:\penalty0 074504, 2021.

\bibitem[Brehmer et~al.(2023)Brehmer, Bose, Haan, and Cohen]{brehmer2023edgi}
Brehmer, J., Bose, J., Haan, P.~D., and Cohen, T.
\newblock {EDGI}: Equivariant diffusion for planning with embodied agents.
\newblock In \emph{Workshop on Reincarnating Reinforcement Learning at ICLR
  2023}, 2023.

\bibitem[Chen et~al.(2020)Chen, Lu, Chenli, Zhu, and Tian]{chen2020vflow}
Chen, J., Lu, C., Chenli, B., Zhu, J., and Tian, T.
\newblock Vflow: More expressive generative flows with variational data
  augmentation.
\newblock In \emph{International Conference on Machine Learning}, pp.\
  1660--1669. PMLR, 2020.

\bibitem[Chen et~al.(2018)Chen, Rubanova, Bettencourt, and
  Duvenaud]{chen2018neural}
Chen, T.~Q., Rubanova, Y., Bettencourt, J., and Duvenaud, D.~K.
\newblock Neural ordinary differential equations.
\newblock In \emph{Advances in neural information processing systems}, pp.\
  6571--6583, 2018.

\bibitem[Cohen et~al.(2021)Cohen, Amos, and Lipman]{cohen2021}
Cohen, S., Amos, B., and Lipman, Y.
\newblock Riemannian convex potential maps.
\newblock In Meila, M. and Zhang, T. (eds.), \emph{Proceedings of the 38th
  International Conference on Machine Learning}, volume 139 of
  \emph{Proceedings of Machine Learning Research}, pp.\  2028--2038. PMLR,
  2021.

\bibitem[Coretti et~al.(2022)Coretti, Falkner, Geissler, and
  Dellago]{Coretti2022}
Coretti, A., Falkner, S., Geissler, P., and Dellago, C.
\newblock Learning {{Mappings}} between {{Equilibrium States}} of {{Liquid
  Systems Using Normalizing Flows}}.
\newblock \emph{arXiv preprint arXiv:2208.10420}, 2022.

\bibitem[Dibak et~al.(2021)Dibak, Klein, Kr{\"a}mer, and No{\'e}]{Dibak2021}
Dibak, M., Klein, L., Kr{\"a}mer, A., and No{\'e}, F.
\newblock Temperature {{Steerable Flows}} and {{Boltzmann Generators}}.
\newblock \emph{Physical Review Research}, 4\penalty0 (4):\penalty0 L042005,
  2021.

\bibitem[Ding \& Zhang(2021)Ding and Zhang]{ding2021deepbar}
Ding, X. and Zhang, B.
\newblock {{DeepBAR}}: {{A Fast}} and {{Exact Method}} for {{Binding Free
  Energy Computation}}.
\newblock \emph{The Journal of Physical Chemistry Letters}, 12:\penalty0
  2509--2515, 2021.

\bibitem[Dinh et~al.(2014)Dinh, Krueger, and Bengio]{dinh2014nice}
Dinh, L., Krueger, D., and Bengio, Y.
\newblock Nice: Non-linear independent components estimation.
\newblock \emph{arXiv preprint arXiv:1410.8516}, 2014.

\bibitem[Dinh et~al.(2017)Dinh, Sohl{-}Dickstein, and Bengio]{dinh2016rnvp}
Dinh, L., Sohl{-}Dickstein, J., and Bengio, S.
\newblock Density estimation using real {NVP}.
\newblock In \emph{5th International Conference on Learning Representations,
  {ICLR} 2017, Toulon, France, April 24-26, 2017, Conference Track
  Proceedings}. OpenReview.net, 2017.

\bibitem[Eastman et~al.(2017)Eastman, Swails, Chodera, McGibbon, Zhao,
  Beauchamp, Wang, Simmonett, Harrigan, Stern, et~al.]{eastman2017openmm}
Eastman, P., Swails, J., Chodera, J.~D., McGibbon, R.~T., Zhao, Y., Beauchamp,
  K.~A., Wang, L.-P., Simmonett, A.~C., Harrigan, M.~P., Stern, C.~D., et~al.
\newblock Openmm 7: Rapid development of high performance algorithms for
  molecular dynamics.
\newblock \emph{PLoS computational biology}, 13\penalty0 (7):\penalty0
  e1005659, 2017.

\bibitem[Falorsi(2021)]{falorsi2021continuous}
Falorsi, L.
\newblock Continuous normalizing flows on manifolds.
\newblock \emph{arXiv preprint arXiv:2104.14959}, 2021.

\bibitem[Falorsi et~al.(2019)Falorsi, de~Haan, Davidson, and
  Forr{\'e}]{falorsi2019reparameterizing}
Falorsi, L., de~Haan, P., Davidson, T.~R., and Forr{\'e}, P.
\newblock Reparameterizing distributions on lie groups.
\newblock In \emph{The 22nd International Conference on Artificial Intelligence
  and Statistics}, pp.\  3244--3253. PMLR, 2019.

\bibitem[Frenkel \& Ladd(1984)Frenkel and Ladd]{Frenkel1984}
Frenkel, D. and Ladd, A. J.~C.
\newblock New {{Monte Carlo}} method to compute the free energy of arbitrary
  solids. {{Application}} to the fcc and hcp phases of hard spheres.
\newblock \emph{The Journal of Chemical Physics}, 81\penalty0 (7):\penalty0
  3188--3193, 1984.

\bibitem[Frenkel \& Smit(2001)Frenkel and Smit]{FrenkelBook}
Frenkel, D. and Smit, B.
\newblock \emph{Understanding {{Molecular Simulation}}: From Algorithms to
  Applications}.
\newblock {Academic Press}, {New York}, 2001.

\bibitem[Gabri{\'e} et~al.(2022)Gabri{\'e}, Rotskoff, and
  {Vanden-Eijnden}]{Gabrie2022}
Gabri{\'e}, M., Rotskoff, G.~M., and {Vanden-Eijnden}, E.
\newblock Adaptive {{Monte Carlo}} augmented with normalizing flows.
\newblock \emph{Proc. Natl. Acad. Sci. U.S.A.}, 119\penalty0 (10):\penalty0
  e2109420119, 2022.

\bibitem[Garcia~Satorras et~al.(2021)Garcia~Satorras, Hoogeboom, Fuchs, Posner,
  and Welling]{satorras2021n}
Garcia~Satorras, V., Hoogeboom, E., Fuchs, F., Posner, I., and Welling, M.
\newblock E (n) equivariant normalizing flows.
\newblock \emph{Advances in Neural Information Processing Systems},
  34:\penalty0 4181--4192, 2021.

\bibitem[Gemici et~al.(2016)Gemici, Rezende, and
  Mohamed]{gemici2016normalizing}
Gemici, M.~C., Rezende, D., and Mohamed, S.
\newblock Normalizing flows on riemannian manifolds.
\newblock \emph{arXiv preprint arXiv:1611.02304}, 2016.

\bibitem[Hendrycks \& Gimpel(2016)Hendrycks and
  Gimpel]{Hendrycks2016GaussianEL}
Hendrycks, D. and Gimpel, K.
\newblock Gaussian error linear units (gelus).
\newblock \emph{arXiv preprint arXiv:1606.08415}, 2016.

\bibitem[Horn et~al.(2004)Horn, Swope, Pitera, Madura, Dick, Hura, and
  {Head-Gordon}]{Horn2004}
Horn, H.~W., Swope, W.~C., Pitera, J.~W., Madura, J.~D., Dick, T.~J., Hura,
  G.~L., and {Head-Gordon}, T.
\newblock Development of an improved four-site water model for biomolecular
  simulations: {{TIP4P-Ew}}.
\newblock \emph{The Journal of Chemical Physics}, 120\penalty0 (20):\penalty0
  9665--9678, 2004.

\bibitem[Huang et~al.(2020)Huang, Dinh, and Courville]{huang2020augmented}
Huang, C.-W., Dinh, L., and Courville, A.
\newblock Augmented normalizing flows: Bridging the gap between generative
  flows and latent variable models.
\newblock \emph{arXiv preprint arXiv:2002.07101}, 2020.

\bibitem[Huang et~al.(2021)Huang, Chen, Tsirigotis, and
  Courville]{huang2021convex}
Huang, C.-W., Chen, R. T.~Q., Tsirigotis, C., and Courville, A.
\newblock Convex potential flows: Universal probability distributions with
  optimal transport and convex optimization.
\newblock In \emph{International Conference on Learning Representations}, 2021.

\bibitem[Invernizzi et~al.(2022)Invernizzi, Krämer, Clementi, and
  Noé]{invernizzi2022}
Invernizzi, M., Krämer, A., Clementi, C., and Noé, F.
\newblock Skipping the replica exchange ladder with normalizing flows.
\newblock \emph{The Journal of Physical Chemistry Letters}, 13\penalty0
  (50):\penalty0 11643--11649, 2022.
\newblock PMID: 36484770.

\bibitem[Jarzynski(2002)]{Jarzynski2002}
Jarzynski, C.
\newblock Targeted free energy perturbation.
\newblock \emph{Physical Review E - Statistical Physics, Plasmas, Fluids, and
  Related Interdisciplinary Topics}, 65\penalty0 (4):\penalty0 5, 2002.

\bibitem[Kalatzis et~al.(2021)Kalatzis, Ye, Pouplin, Wohlert, and
  Hauberg]{kalatzis2021density}
Kalatzis, D., Ye, J.~Z., Pouplin, A., Wohlert, J., and Hauberg, S.
\newblock Density estimation on smooth manifolds with normalizing flows.
\newblock \emph{arXiv preprint arXiv:2106.03500}, 2021.

\bibitem[Kapil et~al.(2022)Kapil, Schran, Zen, Chen, Pickard, and
  Michaelides]{Kapil2022}
Kapil, V., Schran, C., Zen, A., Chen, J., Pickard, C.~J., and Michaelides, A.
\newblock The first-principles phase diagram of monolayer nanoconfined water.
\newblock \emph{Nature}, 609\penalty0 (7927):\penalty0 512--516, 2022.

\bibitem[Kato \& McCullagh(2020)Kato and McCullagh]{Kato_2020}
Kato, S. and McCullagh, P.
\newblock Some properties of a cauchy family on the sphere derived from the
  möbius transformations.
\newblock \emph{Bernoulli}, 26\penalty0 (4), 2020.

\bibitem[Katsman et~al.(2021)Katsman, Lou, Lim, Jiang, Lim, and
  De~Sa]{Katsmann2021}
Katsman, I., Lou, A., Lim, D., Jiang, Q., Lim, S.~N., and De~Sa, C.~M.
\newblock Equivariant manifold flows.
\newblock In Ranzato, M., Beygelzimer, A., Dauphin, Y., Liang, P., and Vaughan,
  J.~W. (eds.), \emph{Advances in Neural Information Processing Systems},
  volume~34, pp.\  10600--10612. Curran Associates, Inc., 2021.

\bibitem[Kingma \& Ba(2015)Kingma and Ba]{kingma2014adam}
Kingma, D.~P. and Ba, J.
\newblock Adam: {A} method for stochastic optimization.
\newblock In Bengio, Y. and LeCun, Y. (eds.), \emph{3rd International
  Conference on Learning Representations, {ICLR} 2015, San Diego, CA, USA, May
  7-9, 2015, Conference Track Proceedings}, 2015.

\bibitem[Kish(1965)]{kish1965}
Kish, L.
\newblock Sampling {{Organizations}} and {{Groups}} of {{Unequal Sizes}}.
\newblock \emph{American Sociological Review}, 30\penalty0 (4):\penalty0 564,
  1965.

\bibitem[K{\"o}hler et~al.(2020)K{\"o}hler, Klein, and
  No{\'e}]{kohler2020equivariant}
K{\"o}hler, J., Klein, L., and No{\'e}, F.
\newblock Equivariant flows: exact likelihood generative learning for symmetric
  densities.
\newblock In \emph{International Conference on Machine Learning}, pp.\
  5361--5370. PMLR, 2020.

\bibitem[K{\"o}hler et~al.(2021)K{\"o}hler, Kr{\"a}mer, and
  No{\'e}]{kohler2021smooth}
K{\"o}hler, J., Kr{\"a}mer, A., and No{\'e}, F.
\newblock Smooth normalizing flows.
\newblock \emph{Advances in Neural Information Processing Systems},
  34:\penalty0 2796--2809, 2021.

\bibitem[Köhler et~al.(2023)Köhler, Chen, Krämer, Clementi, and
  Noé]{koehler2023}
Köhler, J., Chen, Y., Krämer, A., Clementi, C., and Noé, F.
\newblock Flow-matching: Efficient coarse-graining of molecular dynamics
  without forces.
\newblock \emph{Journal of Chemical Theory and Computation}, 2023.
\newblock PMID: 36668906.

\bibitem[Li \& Wang(2018)Li and Wang]{li2018neural}
Li, S.-H. and Wang, L.
\newblock Neural network renormalization group.
\newblock \emph{Physical review letters}, 121\penalty0 (26):\penalty0 260601,
  2018.

\bibitem[Liu \& Nocedal(1989)Liu and Nocedal]{liu1989limited}
Liu, D.~C. and Nocedal, J.
\newblock On the limited memory bfgs method for large scale optimization.
\newblock \emph{Mathematical programming}, 45\penalty0 (1):\penalty0 503--528,
  1989.

\bibitem[Liu et~al.(2023)Liu, Liu, Yin, Wang, Chen, and Wang]{liu2023delving}
Liu, Y., Liu, H., Yin, Y., Wang, Y., Chen, B., and Wang, H.
\newblock Delving into {{Discrete Normalizing Flows}} on {{SO}}(3) {{Manifold}}
  for {{Probabilistic Rotation Modeling}}, 2023.

\bibitem[Lou et~al.(2020)Lou, Lim, Katsman, Huang, Jiang, Lim, and
  De~Sa]{lou2020neural}
Lou, A., Lim, D., Katsman, I., Huang, L., Jiang, Q., Lim, S.~N., and De~Sa,
  C.~M.
\newblock Neural manifold ordinary differential equations.
\newblock In Larochelle, H., Ranzato, M., Hadsell, R., Balcan, M.~F., and Lin,
  H. (eds.), \emph{Advances in Neural Information Processing Systems},
  volume~33, pp.\  17548--17558. Curran Associates, Inc., 2020.

\bibitem[Mathieu \& Nickel(2020)Mathieu and Nickel]{mathieu2020riemannian}
Mathieu, E. and Nickel, M.
\newblock Riemannian continuous normalizing flows.
\newblock \emph{Advances in Neural Information Processing Systems}, 33, 2020.

\bibitem[Matsumoto et~al.(2021)Matsumoto, Yagasaki, and Tanaka]{Matsumoto2021}
Matsumoto, M., Yagasaki, T., and Tanaka, H.
\newblock Novel {{Algorithm}} to {{Generate Hydrogen-Disordered Ice
  Structures}}.
\newblock \emph{Journal of Chemical Information and Modeling}, 61\penalty0
  (6):\penalty0 2542--2546, 2021.

\bibitem[Midgley et~al.(2022)Midgley, Stimper, Simm, Schölkopf, and
  Hernández-Lobato]{Midgley2022}
Midgley, L.~I., Stimper, V., Simm, G. N.~C., Schölkopf, B., and
  Hernández-Lobato, J.~M.
\newblock Flow annealed importance sampling bootstrap, 2022.

\bibitem[M{\"u}ller et~al.(2019)M{\"u}ller, Mcwilliams, Rousselle, Gross, and
  Nov{\'a}k]{Muller2019}
M{\"u}ller, T., Mcwilliams, B., Rousselle, F., Gross, M., and Nov{\'a}k, J.
\newblock Neural {{Importance Sampling}}.
\newblock \emph{ACM Transactions on Graphics}, 38\penalty0 (5):\penalty0 1--19,
  2019.

\bibitem[Murphy et~al.(2021)Murphy, Esteves, Jampani, Ramalingam, and
  Makadia]{murphy2021}
Murphy, K.~A., Esteves, C., Jampani, V., Ramalingam, S., and Makadia, A.
\newblock Implicit-pdf: Non-parametric representation of probability
  distributions on the rotation manifold.
\newblock In Meila, M. and Zhang, T. (eds.), \emph{Proceedings of the 38th
  International Conference on Machine Learning}, volume 139 of
  \emph{Proceedings of Machine Learning Research}, pp.\  7882--7893. PMLR,
  2021.

\bibitem[Nicoli et~al.(2020)Nicoli, Nakajima, Strodthoff, Samek, M{\"u}ller,
  and Kessel]{nicoli2020asymptotically}
Nicoli, K.~A., Nakajima, S., Strodthoff, N., Samek, W., M{\"u}ller, K.-R., and
  Kessel, P.
\newblock Asymptotically unbiased estimation of physical observables with
  neural samplers.
\newblock \emph{Physical Review E}, 101\penalty0 (2):\penalty0 023304, 2020.

\bibitem[Nicoli et~al.(2021)Nicoli, Anders, Funcke, Hartung, Jansen, Kessel,
  Nakajima, and Stornati]{nicoli2021estimation}
Nicoli, K.~A., Anders, C.~J., Funcke, L., Hartung, T., Jansen, K., Kessel, P.,
  Nakajima, S., and Stornati, P.
\newblock Estimation of thermodynamic observables in lattice field theories
  with deep generative models.
\newblock \emph{Physical review letters}, 126\penalty0 (3):\penalty0 032001,
  2021.

\bibitem[Nielsen et~al.(2020)Nielsen, Jaini, Hoogeboom, Winther, and
  Welling]{nielsen2020survae}
Nielsen, D., Jaini, P., Hoogeboom, E., Winther, O., and Welling, M.
\newblock Survae flows: Surjections to bridge the gap between vaes and flows.
\newblock In \emph{Advances in Neural Information Processing Systems},
  volume~33, 2020.

\bibitem[No{\'e} et~al.(2019)No{\'e}, Olsson, K{\"o}hler, and
  Wu]{noe2019boltzmann}
No{\'e}, F., Olsson, S., K{\"o}hler, J., and Wu, H.
\newblock Boltzmann generators: Sampling equilibrium states of many-body
  systems with deep learning.
\newblock \emph{Science}, 365\penalty0 (6457):\penalty0 eaaw1147, 2019.

\bibitem[Papamakarios et~al.(2021)Papamakarios, Nalisnick, Rezende, Mohamed,
  and Lakshminarayanan]{papamakarios2019normalizing}
Papamakarios, G., Nalisnick, E., Rezende, D.~J., Mohamed, S., and
  Lakshminarayanan, B.
\newblock Normalizing flows for probabilistic modeling and inference.
\newblock \emph{Journal of Machine Learning Research}, 22\penalty0
  (57):\penalty0 1--64, 2021.

\bibitem[Rezende \& Mohamed(2015)Rezende and Mohamed]{rezende2015variational}
Rezende, D. and Mohamed, S.
\newblock Variational inference with normalizing flows.
\newblock In \emph{International Conference on Machine Learning}, pp.\
  1530--1538. PMLR, 2015.

\bibitem[Rezende \& Racani{\`e}re(2021)Rezende and
  Racani{\`e}re]{rezende2021implicit}
Rezende, D.~J. and Racani{\`e}re, S.
\newblock Implicit riemannian concave potential maps.
\newblock \emph{arXiv preprint arXiv:2110.01288}, 2021.

\bibitem[Rezende et~al.(2020)Rezende, Papamakarios, Racaniere, Albergo, Kanwar,
  Shanahan, and Cranmer]{rezende2020normalizing}
Rezende, D.~J., Papamakarios, G., Racaniere, S., Albergo, M., Kanwar, G.,
  Shanahan, P., and Cranmer, K.
\newblock Normalizing flows on tori and spheres.
\newblock In \emph{International Conference on Machine Learning}, pp.\
  8083--8092. PMLR, 2020.

\bibitem[Rizzi et~al.(2021)Rizzi, Carloni, and Parrinello]{Rizzi2021}
Rizzi, A., Carloni, P., and Parrinello, M.
\newblock Targeted {{Free Energy Perturbation Revisited}}: {{Accurate Free
  Energies}} from {{Mapped Reference Potentials}}.
\newblock \emph{The Journal of Physical Chemistry Letters}, 12\penalty0
  (39):\penalty0 9449--9454, 2021.

\bibitem[Sbail{\`o} et~al.(2021)Sbail{\`o}, Dibak, and
  No{\'e}]{sbailo2021neural}
Sbail{\`o}, L., Dibak, M., and No{\'e}, F.
\newblock Neural mode jump monte carlo.
\newblock \emph{The Journal of Chemical Physics}, 154\penalty0 (7):\penalty0
  074101, 2021.

\bibitem[Schieber \& Shirts(2019)Schieber and Shirts]{Schieber2019}
Schieber, N.~P. and Shirts, M.~R.
\newblock Configurational mapping significantly increases the efficiency of
  solid-solid phase coexistence calculations via molecular dynamics:
  {{Determining}} the {{FCC-HCP}} coexistence line of {{Lennard-Jones}}
  particles.
\newblock \emph{The Journal of Chemical Physics}, 150\penalty0 (16):\penalty0
  164112, 2019.

\bibitem[Schieber et~al.(2018)Schieber, Dybeck, and Shirts]{Schieber2018}
Schieber, N.~P., Dybeck, E.~C., and Shirts, M.~R.
\newblock Using reweighting and free energy surface interpolation to predict
  solid-solid phase diagrams.
\newblock \emph{The Journal of Chemical Physics}, 148\penalty0 (14):\penalty0
  144104, 2018.

\bibitem[Shirts \& Chodera(2008)Shirts and Chodera]{Shirts2008mbar}
Shirts, M.~R. and Chodera, J.~D.
\newblock Statistically optimal analysis of samples from multiple equilibrium
  states.
\newblock \emph{The Journal of Chemical Physics}, 129\penalty0 (12):\penalty0
  124105, 2008.

\bibitem[Tabak et~al.(2010)Tabak, Vanden-Eijnden, et~al.]{tabak2010density}
Tabak, E.~G., Vanden-Eijnden, E., et~al.
\newblock Density estimation by dual ascent of the log-likelihood.
\newblock \emph{Communications in Mathematical Sciences}, 8\penalty0
  (1):\penalty0 217--233, 2010.

\bibitem[Vaswani et~al.(2017)Vaswani, Shazeer, Parmar, Uszkoreit, Jones, Gomez,
  Kaiser, and Polosukhin]{vaswani2017attention}
Vaswani, A., Shazeer, N., Parmar, N., Uszkoreit, J., Jones, L., Gomez, A.~N.,
  Kaiser, {\L}., and Polosukhin, I.
\newblock Attention is all you need.
\newblock \emph{Advances in neural information processing systems}, 30, 2017.

\bibitem[Vega et~al.(2008)Vega, Sanz, Abascal, and Noya]{Vega2008}
Vega, C., Sanz, E., Abascal, J. L.~F., and Noya, E.~G.
\newblock Determination of phase diagrams via computer simulation: Methodology
  and applications to water, electrolytes and proteins.
\newblock \emph{Journal of Physics: Condensed Matter}, 20\penalty0
  (15):\penalty0 153101, 2008.

\bibitem[Wirnsberger et~al.(2020)Wirnsberger, Ballard, Papamakarios,
  Abercrombie, Racani{\`e}re, Pritzel, Blundell,
  et~al.]{wirnsberger2020targeted}
Wirnsberger, P., Ballard, A., Papamakarios, G., Abercrombie, S., Racani{\`e}re,
  S., Pritzel, A., Blundell, C., et~al.
\newblock Targeted free energy estimation via learned mappings.
\newblock \emph{The Journal of Chemical Physics}, 153\penalty0 (14):\penalty0
  144112--144112, 2020.

\bibitem[Wirnsberger et~al.(2022)Wirnsberger, Papamakarios, Ibarz, Racanière,
  Ballard, Pritzel, and Blundell]{wirnsberger2022}
Wirnsberger, P., Papamakarios, G., Ibarz, B., Racanière, S., Ballard, A.~J.,
  Pritzel, A., and Blundell, C.
\newblock Normalizing flows for atomic solids.
\newblock \emph{Machine Learning: Science and Technology}, 3\penalty0
  (2):\penalty0 025009, 2022.

\bibitem[Wu et~al.(2020)Wu, K\"{o}hler, and Noe]{wu2020snf}
Wu, H., K\"{o}hler, J., and Noe, F.
\newblock Stochastic normalizing flows.
\newblock In Larochelle, H., Ranzato, M., Hadsell, R., Balcan, M.~F., and Lin,
  H. (eds.), \emph{Advances in Neural Information Processing Systems},
  volume~33, pp.\  5933--5944. Curran Associates, Inc., 2020.

\bibitem[Xu et~al.(2021)Xu, Luo, Bengio, Peng, and Tang]{xu2021learning}
Xu, M., Luo, S., Bengio, Y., Peng, J., and Tang, J.
\newblock Learning neural generative dynamics for molecular conformation
  generation.
\newblock In \emph{9th International Conference on Learning Representations,
  {ICLR} 2021, Virtual Event, Austria, May 3-7, 2021}, 2021.

\bibitem[Zhang et~al.(2019)Zhang, Liu, Yan, Tuckerman, and Liu]{Zhang2019a}
Zhang, Z., Liu, X., Yan, K., Tuckerman, M.~E., and Liu, J.
\newblock Unified {{Efficient Thermostat Scheme}} for the {{Canonical
  Ensemble}} with {{Holonomic}} or {{Isokinetic Constraints}} via {{Molecular
  Dynamics}}.
\newblock \emph{The Journal of Physical Chemistry A}, 123\penalty0
  (28):\penalty0 6056--6079, 2019.

\bibitem[Zwanzig(1954)]{Zwanzig1954}
Zwanzig, R.~W.
\newblock High-{{Temperature Equation}} of {{State}} by a {{Perturbation
  Method}}. {{I}}. {{Nonpolar Gases}}.
\newblock \emph{The Journal of Chemical Physics}, 22\penalty0 (8):\penalty0
  1420--1426, 1954.

\end{thebibliography}
\bibliographystyle{icml2023}

\newpage
\appendix
\onecolumn

\change{
\section{The sampling problem in molecular crystals} \label{a:intro_mbar}

Molecular crystals are of great interest for several important applications, but there are many open problems when it comes to efficiently characterize their properties via computer simulations.
One of the reasons is that there are an exponentially high number of energetically stable polymorphs, but at any given thermodynamic condition only few are stable enough to be observed experimentally, and only one is the most stable one.
Even for a simple molecule like water, more than 20 crystal polymorphs have been observed 
\footnote{\url{https://en.wikipedia.org/wiki/Ice}}.
This poses several challenges, that are typically tackled with different methodologies.
Here we do not focus on how to find all the energetically stable polymorphs, but rather on the stability at non-zero temperature, where entropic effects are important and thus free energy differences must be estimated, instead of just energy differences.

One of the most popular ways of computing free energy differences for atomic and molecular crystals, is thermodynamic integration (TI) \cite{FrenkelBook}.
As an example, let us consider two states, A and B, with energies $u_A(\mathbf{x})$ and $u_B(\mathbf{x})$. 
To estimate the free energy difference $\Delta F_{AB}$ with TI, one has to run multiple MD or MCMC simulations of the system along an interpolation between $u_A$ and $u_B$, such that each simulation samples a region of the phase space $\mathbf{x}$ that has some overlap with the closest ones.
In the example considered in our paper, A and B are simply two different temperatures and the interpolation is done by slowly changing the temperature, but one can also perform TI between a physical state and a reference ideal normal distribution, usually referred as Einstein crystal in the literature \cite{FrenkelBook}.
Once these simulations have been performed, the actual estimate of $\Delta F_{AB}$ can be obtained with various postprocessing methods, but a typical choice is the MBAR method, that has been proved to provide the lowest variance estimator \cite{Shirts2008mbar}.

Possibly, the most straightforward way of estimating $\Delta F_{AB}$ is to use the free energy perturbation formula \cite{Zwanzig1954}:
\begin{equation}
    \Delta F_{AB} = -\log \langle e^{u_A-u_B} \rangle_A = \log \langle e^{u_B-u_A} \rangle_B
\end{equation}
which requires samples either from A or B.
It is important to notice that a good overlap in configuration space is crucial, otherwise the variance of the ensemble average is orders of magnitude larger than $\Delta F_{AB}$.
Sampling a ladder of overlapping intermediate states allows one to use this formula to estimate $\Delta F_{AB}$ one step at the time.
The MBAR method is based on the same idea, but uses a self consistent procedure to combine all the samples and obtain an estimator which minimizes the variance \cite{Shirts2008mbar}.

The main drawback of TI is that it can be computationally extremely expensive, requiring sampling from several intermediate states that are of no direct interest.
This is especially exacerbated in the case of molecular crystals, where the integration path can be highly nontrivial and where to obtain accurate potential energies one must perform expensive quantum mechanical calculations.
To avoid this expensive calculation, \citet{Jarzynski2002} proposed to use an explicit invertible map to bridge A and B, the so called targeted free energy perturbation method.
Defining such maps is far from trivial, even for the simplest systems, that is why the method has rarely been used.
However, recently \citet{wirnsberger2020targeted} proposed to use normalizing flows to learn such maps, which gave rise to the LFEP method that is used also in this work.
}

\section{Proofs and derivations}

\subsection{Volume change on manifolds}
\label{a:volume-change-on-manifolds}
We follow the notation of \citet{rezende2020normalizing} Appendix A.
Let $\mathcal{M}$ and $\mathcal{N}$ be $m$ and $n$ dimensional submanifolds of $\mathbb{R}^{d}$ respectively and $F\colon \mathcal{M} \rightarrow \mathcal{N}$ a smooth injective map that can be extended to open neighborhoods of $\mathcal{M}$ and $\mathcal{N}$. Then we can compute its induced change of volume as follows: let $T_{\bm x}\mathcal{M}$ and $T_{\bm F(\bm x)}\mathcal{N}$ be the tangent spaces at a point $\bm x \in \mathcal{M}$ and its image $ F(\bm x) \in \mathcal{N}$. 
Furthermore, let $\bm E_{\bm x} \in \mathbb{R}^{d\times n}$ and $\bm E_{F(\bm x)} \in \mathbb{R}^{d\times m}$ be bases of $T_{\bm x} \mathcal{M}$ and $T_{\bm F(\bm x)} \mathcal{N}$ respectively. Then
\begin{align}
    |\bm J_{F}(\bm x)| = \sqrt{\det \bm E_{\bm x}^T \bm J_{F}^T(\bm x) \bm J_{F}(\bm x) \bm E_{\bm x}}.
\end{align}
If $m=n$ we also have
\begin{align}
    |\bm J_{F}(\bm x)| = \det \bm E_{F(\bm x)}^T \bm J_{F}(\bm x) \bm E_{\bm x}.
\end{align}

\subsection{Density of the mixture}
Let $p_{0}$ be a density over the inputs $\bm x$.
Furthermore, define
\begin{align}
    A_{+}(\bm x_{0}, \bm R) = (\bm x_{0}, \bm q(\bm R)), ~ 
    A_{-}(\bm x_{0}, \bm R) = (\bm x_{0}, \bm -q(\bm R)), 
\end{align}
Summing up the probabilities of each path and accounting for the induced volume change of each transformation, we obtain the mixture density
\begin{align}
    \label{eq:full-mixture-density}
    p(\bm x') =& 
    \tfrac{1}{2} \cdot | \bm J_{T_{\bm \Psi}}(\bm x')| \cdot |\bm J_{F^{-1}}(A_{+}(T_{\bm \Psi}(\bm x')))|  \cdot | \bm J_{T^{-1}_{\bm \Psi}}(F^{-1}(A_{+}(T_{\bm \Psi}(\bm x')))| \cdot  p_{0}(T_{\bm \Psi}^{-1}(F^{-1}(A_{+}(T_{\bm \Psi}(\bm x'))) ))\nonumber \\ 
    &+ \tfrac{1}{2} \cdot | \bm J_{T_{\bm \Psi}}(\bm x')| \cdot |\bm J_{F^{-1}}(A_{-}(T_{\bm \Psi}(\bm x')))|  \cdot | \bm J^{-1}_{T_{\bm \Psi}}(F^{-1}(A_{-}(T_{\bm \Psi}(\bm x')))| \cdot  p_{0}(T_{\bm \Psi}^{-1}(F^{-1}(A_{-}(T_{\bm \Psi}(\bm x'))) )) 
\end{align}

First, we see the following: after $T_{\bm \Psi}$ maps $\bm \Psi$ into the standard frame any change in $\bm x_{0}$ or $\bm R$ is merely a SE(3) action and as such does not contribute to the volume. From that, we get that 
$| \bm J_{T_{\bm \Psi}}(\bm x')| \cdot | \bm J_{T^{-1}_{\bm \Psi}}(F^{-1}(A_{\pm}(T_{\bm \Psi}(\bm x')))| = 1.$

Now let $F$ be flip-symmetric according to the definition in Sec. \ref{sec:flip-equi-diffeos}. 

From the definition of $F$ and using the double cover we get $T_{\bm \Psi}^{-1}(F^{-1}(A_{-}(T_{\bm \Psi}(\bm x'))) ) = T_{\bm \Psi}^{-1}(F^{-1}(A_{+}(T_{\bm \Psi}(\bm x'))) )$.

We furthermore get
\begin{align}
    \bm J_{F}(\bm x, -\bm q) = \underbrace{\left[ \begin{matrix}
        \bm I_{3 \times 3} & \bm 0\\
        \bm 0 & - \bm I_{4 \times 4}
    \end{matrix} \right]}_{\bm B :=} \bm J_{F}(\bm x, \bm q).
\end{align}
Let $\bm E$ be a basis according to Sec. \ref{a:volume-change-on-manifolds}. Then we immediately see that 
\begin{align}
    |\bm J_{F}(\bm x, \bm q)| 
    &= \sqrt{\det \bm E^T \bm J_{F}(\bm x, \bm q)^T \bm J_{F}(\bm x, \bm q) \bm E} \\
    &= \sqrt{\det \bm E^T \bm J_{F}(\bm x, \bm q)^T \bm B^T \bm B \bm J_{F}(\bm x, \bm q) \bm E} \\
    &= \sqrt{\det \bm E^T \bm J_{F}(\bm x, -\bm q)^T\bm J_{F}(\bm x, -\bm q) \bm E} \\
    &= |\bm J_{F}(\bm x, -\bm q)| 
\end{align}

Combining this we can simplify Eq. \eqref{eq:full-mixture-density} into 

\begin{align}
    p(\bm x') =& 
    \tfrac{1}{2} \cdot |\bm J_{F^{-1}}(A_{+}(T_{\bm \Psi}(\bm x')))| \cdot  p_{0}(T_{\bm \Psi}^{-1}(F^{-1}(A_{+}(T_{\bm \Psi}(\bm x'))) ))\nonumber \\ 
    &+ \tfrac{1}{2} \cdot |\bm J_{F^{-1}}(A_{-}(T_{\bm \Psi}(\bm x')))|  \cdot  p_{0}(T_{\bm \Psi}^{-1}(F^{-1}(A_{-}(T_{\bm \Psi}(\bm x'))) )) \\ 
    =& 
    \tfrac{1}{2} \cdot |\bm J_{F^{-1}}(A_{+}(T_{\bm \Psi}(\bm x')))| \cdot  p_{0}(T_{\bm \Psi}^{-1}(F^{-1}(A_{+}(T_{\bm \Psi}(\bm x'))) ))\nonumber \\ 
    &+ \tfrac{1}{2} \cdot |\bm J_{F^{-1}}(A_{+}(T_{\bm \Psi}(\bm x')))|  \cdot  p_{0}(T_{\bm \Psi}^{-1}(F^{-1}(A_{+}(T_{\bm \Psi}(\bm x'))) ))\\
    =& |\bm J_{F^{-1}}(A_{+}(T_{\bm \Psi}(\bm x')))| \cdot  p_{0}(T_{\bm \Psi}^{-1}(F^{-1}(A_{+}(T_{\bm \Psi}(\bm x'))) )) 
\end{align}
by substituting $\bm x' = \bm x'(\bm x)$ and using the short-hand notation $
    \left|\bm J_{F}(\bm x_{0}, \bm q(\bm R)) \right| = \left|\bm J_{F}(A_{+}(T_{\bm\Psi}(\bm x')) \right|
$
we end up with the formula for the induced volume change:
\begin{align}
   p_{0}(\bm x) = p(\bm x'(\bm x)) \left|\bm J_{F}(\bm x_{0}, \bm q(\bm R)) \right|.
\end{align}



\subsection{Derivations for symmetrized Moebius transform}
A generalized Moebius transform $S^d \to S^d$, given a parameter $\bm q \in B^d = \{\bm q \in \mathbb R^{d+1}  \mid |\bm q| < 1\}$ can be given by
\begin{align}
    \Phi_{M}(\bm p; \bm q)
    &= \bm p - 2 \mathrm{proj}_{\bm q - \bm p}(\bm p) 
\end{align}
with $\mathrm{proj}_{\bm u}(\bm v) = \frac{\bm v^T \bm u}{\|\bm u\|^2_2}\bm v$.
This map is an involutive diffeomorphism with $\Phi_M(\Phi_M(\bm p; \bm q); \bm q)= \bm p$.
The sign-symmetrized variant is 
\begin{align}
    \Phi_{SM}(\bm p; \bm q) = \frac{\Phi_{M}(\bm p; \bm q) + \Phi_{M}(\bm p; -\bm q)}{\| \Phi_{M}(\bm p; \bm q) + \Phi_{M}(\bm p; -\bm q) \|}.
\end{align}
which satisfies $\Phi_{SM}(\bm p;\bm q)=\Phi_{SM}(\bm p; -\bm q)$.

Both maps are equivariant to the choice of orthonormal coordinates, as for any coordinate transformation $g \in O(d)$, $\Phi_M(g \bm p; g \bm q) = g \Phi_M(\bm p; \bm q)$ and then by linearity $\Phi_{SM}(g \bm p; g \bm q) = g \Phi_{SM}(\bm p; \bm q)$.
Combined with the $\bm q \mapsto -\bm q$ invariance of $\Phi_{SM}$, this implies to the desired sign-equivariance: $\Phi_{SM}(-\bm p;\bm q)=-\Phi_{SM}(\bm p;\bm q)$.

Due to the $O(d)$ equivariance, we're free to analyze the maps in a convenient coordinate system. Without loss of generality, we can place point $\bm p \in S^d$ at $(x, \sqrt{1-x^2}, 0, 0, ...)$ and $\bm q \in B^d$ at $(r, 0, 0, ...)$. It is easy to see that the image $\Psi_{SM}(\bm p; \bm q)=(x', y', 0, 0, ....)$ has zeros in all but the first two coordinates. Also, the coordinates are given by the planar $d=1$ version of the transformation: $(x', y')=\Phi_{SM}((x, \sqrt{1-x^2}), (r, 0))$.

Using a computer algebra system, we can simplify this expression to find:
\begin{align*}
x' &= \frac{x(r^2-1)}{\sqrt{1+r^4+r^2(2-4x)}} \\
y'
&=-\sqrt{1-{x'}^2}
\end{align*}

\paragraph{Invertibility}
Given $r$, the map $x \mapsto x'$ can be inverted via a computer algebra system to find:
\begin{align*}
x = \frac{-x'(r^2+1)}{\sqrt{1 + r^4 + r^2(4{x'}^2-2)}}
\end{align*}
By assumption, the $y$ coordinate was in the positive half-plane, so $y=\sqrt{1-x^2}$.

To compute the general inverse $\bm p = \Phi_{SM}^{-1}(\bm p'; \bm q)$, we compute a $g \in O(d)$ so that $g \bm q=(r, 0, ...)$ and $g \bm p' = (x', -\sqrt{1-{x'}^2}, ...)$. Then we use the above inverse to find $g \bm p = (x, \sqrt{1-x^2}, 0, ...)$ and find $\bm p = g^{-1}(x, \sqrt{1-x^2}, 0, ...)$.


\paragraph{Change of volume}\label{a:volume_moebius}
For the change of volume of the symmetric Moebius transformation, we focus on the case $d=3$ of the three-sphere. Now, a convenient parametrization (without loss of generality) is $\bm p = (1, 0, 0, 0)$ and $\bm q = (\sqrt{r^2-q_y^2}, q_y, 0, 0)$. We'll omit the argument $\bm q$ in $\bm p' =\Phi_{SM}(\bm p; \bm q)$ going forward.

Then, using the embedding $\iota: S^3 \hookrightarrow \mathbb R^4$, we can embed the tangent space in the ambient space $d\iota_{\bm p}:T_{\bm p}S^3 \hookrightarrow \mathbb R^4$, where it is given by the vectors $(0, v_1, v_2, v_3)$ for $v \in \mathbb R^3$.

In the tangent direction $v \in T_{\bm p}S^3$, the direction of change of $\bm p'$ is given by $\frac{\partial \Phi_{SM}(\bm p + t  v)}{\partial t}\rvert_{t=0} \in \mathbb R^4$. Thus, the Jacobian matrix, in standard coordinates of the tangent plane, is given by:
\[
J(\bm p)E_{\bm p} = \begin{pmatrix}
\frac{\partial \Phi_{SM}((1, t, 0, 0))}{\partial t}\rvert_{t=0}, 
\frac{\partial \Phi_{SM}((1, 0, t, 0))}{\partial t}\rvert_{t=0}, 
\frac{\partial \Phi_{SM}((1, 0, 0, t))}{\partial t}\rvert_{t=0}
\end{pmatrix}  \in \mathbb R^{4 \times 3}
\]
Via a computer algebra system, we can compute and simplify the change of volume to get:
\[
\sqrt{\det(E_{\bm p}^T J(\bm p)^T J(\bm p)E_{\bm p})}=\frac{(1-r^2)(1+r^2)^3}{(4 q_y^2 + (1-r^2)^2)^2}
\]

In the general case, the change of volume is also given by the above formula with $r = |\bm q|$ and $q_y^2 = r^2 - \langle \bm p, \bm q \rangle^2$.

\subsection{Derivations for projective convex gradient maps}

\paragraph{Invertibility}\label{a:diff_convex_map}

Following Sec. \ref{sec:flip-equi-diffeos} we assume that $\phi \colon \mathbb{R}^{d+1} \rightarrow \mathbb{R}$ is a strictly convex function with minimum at $\bm 0$. 
\begin{theorem}
    The projective convex gradient map $\Phi(\bm p) \colon S^{d} \rightarrow S^{d}, \bm p \mapsto \frac{\nabla_{\bm p} \phi(\bm p)}{\|\nabla_{\bm p} \phi(\bm p)\|}$ is a diffeomorphism.
\end{theorem}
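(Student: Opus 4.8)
The plan is to exhibit $\Phi$ as the composition of the restricted gradient map $\bm p\mapsto\nabla_{\bm p}\phi(\bm p)$ on $S^d$ with the radial projection $N(\bm v)=\bm v/\|\bm v\|$, and to show that the intermediate hypersurface $\Sigma:=\{\nabla_{\bm p}\phi(\bm p):\bm p\in S^d\}$ is a smooth radial graph over the sphere, i.e.\ every ray from the origin meets $\Sigma$ exactly once and transversally. Granting this, $N|_{\Sigma}$ is a diffeomorphism $\Sigma\to S^d$, the map $\nabla\phi|_{S^d}$ is a diffeomorphism $S^d\to\Sigma$, and $\Phi=N|_{\Sigma}\circ\nabla\phi|_{S^d}$ is a diffeomorphism.

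First I would collect the consequences of convexity. Since $\phi$ is strictly convex with $\nabla\phi(\bm 0)=\bm 0$, the gradient is a strictly monotone, injective map, and pairing the monotonicity inequality against $\bm y=\bm 0$ gives $\bm p^\top\nabla_{\bm p}\phi(\bm p)>0$ for every $\bm p\in S^d$. In particular $\nabla_{\bm p}\phi(\bm p)\neq\bm 0$ on the sphere, so $\Phi$ is well defined and smooth, and the radial component of the gradient is strictly positive everywhere on $S^d$ --- this is the geometric reason one expects $\Sigma$ to be transverse to rays.

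The heart of the argument is the radial-graph claim, which I would split into surjectivity and injectivity/transversality. For a fixed target $\bm u\in S^d$, the preimage of the ray $\{t\bm u:t>0\}$ under $\nabla\phi$ is the curve $\bm\gamma(t)=(\nabla\phi)^{-1}(t\bm u)$, equivalently the unique minimizer of $\bm x\mapsto\phi(\bm x)-t\,\bm u^\top\bm x$, with $\bm\gamma(0)=\bm 0$. Coercivity of $\phi$ gives $\|\bm\gamma(t)\|\to\infty$, so by the intermediate value theorem some $t^\ast$ satisfies $\|\bm\gamma(t^\ast)\|=1$, producing a preimage of $\bm u$ and hence surjectivity. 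For uniqueness one must show $t\mapsto\|\bm\gamma(t)\|$ is strictly increasing; differentiating $\nabla\phi(\bm\gamma(t))=t\bm u$ yields $\bm\gamma'(t)=[\nabla^2\phi(\bm\gamma(t))]^{-1}\bm u$ and hence $\tfrac{d}{dt}\tfrac12\|\bm\gamma\|^2=t^{-1}\,\bm\gamma^\top[\nabla^2\phi(\bm\gamma)]^{-1}\nabla\phi(\bm\gamma)$, so the claim reduces to positivity of this expression. The very same computation shows that the differential $d\Phi_{\bm p}(\bm t)=\|\nabla\phi(\bm p)\|^{-1}\,P_{\nabla\phi(\bm p)^\perp}\nabla^2\phi(\bm p)\bm t$ is injective on $T_{\bm p}S^d$, which is the transversality that makes $N|_{\Sigma}$ a local diffeomorphism.

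The main obstacle is precisely this monotonicity/transversality step. The elementary first-order convexity inequality only controls the component $\bm u^\top\bm\gamma(t)$, which is immediately seen to be increasing, but it does not by itself pin down the full norm $\|\bm\gamma(t)\|$, nor does it directly give invertibility of $d\Phi$; bridging that gap forces one to use the positive definiteness of the Hessian in full (so that $\nabla\phi$ is a genuine diffeomorphism of $\mathbb{R}^{d+1}$ and $[\nabla^2\phi]^{-1}$ is well behaved), and I expect this to be exactly where bare strict convexity is too weak and an everywhere positive-definite Hessian on $S^d$ is what is really needed. Once the radial-graph property is secured the conclusion is immediate; alternatively, for the relevant case $d=3$ one can finish more softly by observing that an everywhere-nonsingular $\Phi$ is a proper local diffeomorphism, hence a covering map of the simply connected $S^3$, hence a diffeomorphism --- but this shortcut still rests on the same local-diffeomorphism fact.
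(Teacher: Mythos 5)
Your reduction is the right one, and it is in fact the same reduction the paper makes: writing $\bm g = \nabla\phi(\bm p)$ and $\mathcal H = \nabla^2\phi(\bm p)$, the injectivity of $d\Phi_{\bm p}$ on $T_{\bm p}S^d$ (equivalently, the strict monotonicity of $t\mapsto\|\bm\gamma(t)\|$ at the sphere crossing) is exactly the statement that $\bm p^\top\mathcal H^{-1}\bm g\neq 0$: if $P_{\bm g^\perp}\mathcal H\bm t=0$ for some nonzero $\bm t\perp\bm p$, then $\mathcal H\bm t\propto\bm g$, hence $\bm t\propto\mathcal H^{-1}\bm g$ and $\mathcal H^{-1}\bm g\in\bm p^\perp$. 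But your sketch stops exactly there. You correctly observe that the first-order convexity inequality only yields $\bm p^\top\bm g>0$, which does not control the $\mathcal H^{-1}$-weighted pairing (indeed $\bm x^\top\bm A\bm B\bm x$ can be negative for positive definite $\bm A,\bm B$, so this is not a formality), and you leave the decisive claim as something you ``expect'' to follow from positive definiteness of the Hessian. Since every route you describe --- radial-graph uniqueness, transversality of $\Sigma$ to rays, nonsingularity of $d\Phi$ --- funnels through this one inequality, the proposal has a genuine gap at its crux rather than at a peripheral step.

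The paper closes this gap with a short but non-obvious device: assuming $(\mathcal H^{-1}\bm g)^\top\bm p=0$ for contradiction, it introduces the auxiliary function $\psi(\bm q)=\phi(\bm G\bm q)$ with $\bm G=\mathcal H^{-1}$, which is again strictly convex with minimizer $\bm 0$, and applies the first-order strict-convexity inequality to $\psi$ between $\bm p$ and $\bm 0$; the linear term is engineered to be the problematic pairing $(\bm G\,\nabla\phi)^\top\bm p$, whose vanishing would give $\psi(\bm 0)>\psi(\bm p)$ and contradict minimality at $\bm 0$. Some reweighting of the convexity inequality by the inverse Hessian along these lines is the missing ingredient in your argument; if you adopt it, compute $\nabla\psi$ carefully (the chain rule gives $\nabla\psi(\bm q)=\bm G\,\nabla\phi(\bm G\bm q)$, so the evaluation point must be chosen so that the desired pairing actually appears). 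Two smaller points: your surjectivity step invokes coercivity of $\phi$, which is not among the hypotheses and is not needed --- once nonsingularity is established, the covering-map argument you mention for the simply connected sphere gives global bijectivity, and you are in fact more explicit about that global step than the paper's proof, which only establishes that the restricted Jacobian is nonsingular.
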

\begin{proof}
    Let $\bm p \in S^{d}$ and $\bm p' = \Phi(\bm p)$ its image under the projective convex gradient map.
    Let furthermore $T_{\bm p}, T_{\bm p'}$ be the tangent spaces at $\bm p, \bm p'$, respectively. Furthermore let $\bm E_{\bm p},\bm E_{\bm p'} \in \mathbb{R}^{ (d+1) \times d}$ be ortho-normal bases of the two tangent spaces.
    Then it suffices to show that $\bm A := \bm E_{\bm p'}^T \bm J_{\Phi}(\bm p) \bm E_{\bm p} \in \mathbb{R}^{d \times d}$
    is a non-singular matrix with a non-zero determinant.
    Denote $\bm g(\bm p) := \nabla_{\bm p} \phi(\bm p)$.
    By using the chain rule, we first see that
    \begin{align}
        \bm J_{\Phi_{CG}}(\bm p) 
        &= \frac{\mathcal{H}_{\phi}(\bm p)}{\|\bm g(\bm p)\|} - \frac{\bm g(\bm p) \bm g(\bm p)^T }{ \|\bm g(\bm p)\|^3}  \mathcal{H}_{\phi}(\bm p) \\
        &= \left(\bm I  - \bm p' \bm p'^T \right) \frac{\mathcal{H}_{\phi}(\bm p)}{\|\bm g(\bm p)\|}  
    \end{align}

    We prove by contradiction:
    Assume $\bm A$ is singular and that $\bm v$ is a unit vector with $\bm A \bm v= \bm 0$ and denote $\bm w = \bm E_{\bm p}^T \bm v$. Since $\bm E_{\bm p}$ is a basis of $T_{\bm p}$ we have $\bm w \neq \bm 0$ and furthermore $\bm w \in T_{\bm p}$.
    
    Now because $\bm E_{\bm p}$ is full rank on its image we have $\bm J_{\Phi}(\bm p) \bm w = \bm 0$.

    
    Since $\phi$ is strictly convex $\mathcal{H}_{\phi}(\bm p)$ is a strictly positive definite matrix and thus we know that $\mathcal{H}_{\phi}(\bm p) \bm w \neq \bm 0$. Thus we have
    \begin{align}
        \bm 0 &= \left(\bm I  - \bm p' \bm p'^T \right) \frac{\mathcal{H}_{\phi}(\bm p)}{\|\bm g(\bm p)\|} \bm w \quad \Leftrightarrow  \quad \mathcal{H}_{\phi}(\bm p) \bm w = \bm p' \bm p'^T \mathcal{H}_{\phi}(\bm p) \bm w
    \end{align}
    And as such $\mathcal{H}_{\phi}(\bm p) \bm w \propto \bm p' \propto \bm g(\bm p) \implies \bm w \propto \mathcal{H}^{-1}_{\phi}(\bm p) \bm g(\bm p)$. Thus,  $\mathcal{H}^{-1}_{\phi}(\bm p) \bm g(\bm p) \in T_{\bm p} \implies \left(\mathcal{H}^{-1}_{\phi}(\bm p) \bm g(\bm p)\right)^T \bm p = 0$.

    Now set $\bm G = \mathcal{H}^{-1}_{\phi}(\bm p)$ keeping $\bm p$ fixed and define the function $\psi(\bm q) = \phi(\bm G \bm q)$.
    As $\bm G$ is strictly positive and $\phi$ is strictly convex with minimum $\phi(\bm 0)$ this new function $\psi$ is strictly convex with minimum $\psi(\bm 0)$ as well.

    Now  we can use the strict convexity of $\psi$ to get
    \begin{align}
        \psi(\bm 0) &> \psi(\bm p) + \nabla_{\bm p} \psi(\bm p)^T(\bm 0 - \bm p) \\
        &= \psi(\bm p) - (\bm G \nabla_{\bm p} \phi(\bm p) )^T \bm p \\
        &= \psi(\bm p) - \left(\mathcal{H}^{-1}_{\phi}(\bm p) \bm g(\bm p)\right)^T \bm p\\
        &= \psi(\bm p)
    \end{align}

    
    However, this contradicts $\bm0$ being the minimum of $\psi$.
\end{proof}

\paragraph{Parameterizing the potential $\phi$}
While $\phi$ could be modeled by any general input convex neural network \cite{amos2017} with minimizer $\bm 0$ we decided on a very simple implementation that worked well in practice and is fast to evaluate:

Let $\bm W \in \mathbb{R}^{d\times H}$, $\bm u \in \mathbb{R}_{>0}^{H}$, $\bm b \in \mathbb{R}_{>0}^{H}$ and $c \in \mathbb{R}_{>0}$. Then we define $\phi$ as 
\begin{align}
    \label{fig:gradient-map-icnn}
    \phi(\bm x; \bm W, \bm u, \bm b, c) := \bm u^T \texttt{softsign}(\bm W \bm x, \bm b) + c \cdot \bm x^T \bm x,
\end{align}
where 
\begin{align}
    \texttt{softsign}(\bm x, \bm b) := \log(\bm b + \cosh(\bm x)).
\end{align}
Here $H$ is a hyper-parameter that can control the complexity of the convex potential and thus its capability to model complicated multi-modal density.

\paragraph{Computing the volume element}\label{a:volume_convex_map}

For $d > 3$ computing the volume element, boils down to computing 
\begin{align}
    |\bm J_{\Phi}| = \det \bm E_{\bm p'}^T \bm J_{\Phi}(\bm p) \bm E_{\bm p}
\end{align}
by numeric means, which can become expensive and numerically unstable. For $d=3$ we can compute the full jacobian $\bm J_{\Phi}(\bm p)$, e.g., via \texttt{jax.jacrev} or \texttt{torch.autograd.functional.jacobian}. We can further compute the tangent bases $\bm E_{\bm p}, \bm E_{\bm p'}$ cheaply via a three-step Gram-Schmidt procedure relative to three standard basis vectors $\bm e_i, \bm e_j, \bm e_k$ in $\mathbb{R}^{4}$ which are independent of $\bm p$. Finally, we are only left with computing the determinant of the $3\times3$ matrix $\bm A =\bm E_{\bm p'}^T \bm J_{\Phi}(\bm p) \bm E_{\bm p}$ which can be done analytically and numerically stable, e.g., by computing $\det \bm A = (\bm a_{0} \times \bm a_{1})^{T} \bm a_{2}$.

\paragraph{Numerical inverse}

We can invert the projective convex gradient map by minimizing the residual $\ell \colon \mathbb{R}^{d + 1} \rightarrow \mathbb{R}$
\begin{align}
    \ell(\bm x) = \left\| \Phi_{CG}\left(\frac{\bm x}{\|\bm x\|}\right) - \bm p' \right\|^2
\end{align}
to get $\bm x^{*} = \arg\min_{\bm x \in \mathbb{R}^{4}} \ell(\bm x)$ and setting $\Phi^{-1}(\bm p') = \frac{\bm x^{*}}{\|\bm x^{*}\|}$. For the experiments in this paper, we minimized $\ell$ after training using the LBFGS solver \cite{liu1989limited} as implemented in \texttt{jaxopt} \cite{jaxopt_implicit_diff}. For $d=3$ and the potentials used in this work, the method converges up to an absolute error of $<0.00001$ in 10-15 iterations.

\paragraph{Affine quaternion flows from \citet{liu2023delving} are a special case}\label{a:liu_convex_map}
Let $\bm W \in \textrm{GL}(\mathbb{R}^4$).
The function $\phi(\bm p) = \bm p^T \bm W^T \bm W \bm p$ is strictly convex and satisfies all premises of the former theorem. Then 
\begin{align}
    \Phi(\bm p) &= \frac{\nabla_{\bm p} \phi(\bm p)}{\|\nabla_{\bm p} \phi(\bm p)\|} = \frac{\bm W \bm p}{\| \bm W \bm p \|}
\end{align}
is exactly the affine quaternion map as defined in \citet{liu2023delving}.

\change{
\subsection{Bundle flows} \label{a:covering_flows}
Let $\pi: S^3 \to SO(3)$ denote the fiber bundle projection, with the fibers $\{-1, 1\}$. Note that this is also a covering map.
Thus, for any point $p \in SO(3)$, there is an open neighbourhood $p \in U \subset SO(3)$, for which we can we can locally trivialize the fiber bundle, meaning in this case, we can pick an open set $\hat U \subset S^3$ and negation $-\hat U \subset S^3$, with two homeomorphisms $h_{\pm U}: U \xrightarrow{\sim} \pm \hat U$, such that $\{+\hat U, -\hat U\} = \pi^{-1} (U)$ and $\pi \circ h_{+ \hat U} = \pi \circ h_{- \hat U}=\mathrm{id}_{U}$.

Let $\hat f: S^3 \to S^3$ be a sign-equivariant mapping, meaning that $-\hat f(x)=\hat f(-x)$.
%
%
%
Then we can define a function  $f: SO(3) \to SO(3)$, which restricted to any neighbourhood $U \subset SO(3)$ is defined as:
\[
\restr{f}{U}=\pi \circ \hat f \circ h_{+\hat U} =\pi \circ \hat f \circ h_{-\hat U}
\]
Note that this construction does not depend on the choice of trivialization, and is therefore well-defined.
Then following diagram commutes, making the pair $(\hat f, f)$ a fiber bundle morphism.
\begin{equation}
\begin{tikzcd}[ampersand replacement=\&]
	{S^3} \& {S^3} \\
	{SO(3)} \& {SO(3)}
	\arrow["\pi", from=1-2, to=2-2]
	\arrow["f", from=2-1, to=2-2]
	\arrow["{\hat f}", from=1-1, to=1-2]
	\arrow["\pi"', from=1-1, to=2-1]
\end{tikzcd}
\label{diag:bundle-morphism}
\end{equation}

For a measurable space $X$, let $PX$ denote the space of measures on that space. For a measurable map $a: X \to Y$, let $a^*: PX \to PY$ denote the pushforward. For a stochastic map $b: X \to PY$, we also denote by $b^*: PX \to PY$ the induced map between measures.

Now, let $s: SO(3) \to PS^3$ be a stochastic map that is a stochastic section to the projection: $\pi^* \circ s^* = \mathrm{id}_{PSO(3)}$. Then consider the following diagram:
\[
\begin{tikzcd}[ampersand replacement=\&]
	{P\,SO(3)} \& {PS^3} \& {PS^3} \\
	\& {P\,SO(3)} \& {P\,SO(3)}
	\arrow["{\mathrm{id}}"', from=1-1, to=2-2]
	\arrow["{s^*}", from=1-1, to=1-2]
	\arrow["{\pi^*}"', from=1-2, to=2-2]
	\arrow["{\hat f^*}", from=1-2, to=1-3]
	\arrow["{\pi^*}", from=1-3, to=2-3]
	\arrow["{f^*}"', from=2-2, to=2-3]
\end{tikzcd}
\]
By the assumption that $s$ is a section to $\pi^*$, the left triangle commutes. The right square commutes because diagram \eqref{diag:bundle-morphism} commutes, which induces a commuting diagram of push-forwards. As both the left triangle and the right square commute, the outer two paths commute. This implies that the push-forward given by $f: SO(3) \to SO(3)$ equals first stochastically lifting to $S^3$, then applying the sign-equivariant $\hat f: S^3 \to S^3$, and projecting back to $SO(3)$. That computation does not depend on the choice of the particular stochastic section $s: SO(3) \to PS^3$. One choice could be to deterministically choose either point in each fiber.

More generally, this construction works for any bundle, if the map on the total space is a bundle morphism (satisfies diagram \eqref{diag:bundle-morphism}) and we can construct a stochastic section to the projection map.
}
\section{Details on tetrahedron experiments}\label{a:methane}

\paragraph{Control parameters of the chosen force-field}

The force field is given by the potential 
\begin{align}
    u(\bm x) = C \cdot \sum_{k=1}^{5} \sum_{d=1}^{3}(x_{kd} - c_{d})^{4},
\end{align}
with control parameters $\bm c = (0.09, -0.073, 0.), C= 136.98630$.

\paragraph{Setup of the simulation and data generation}

We use a methane molecule with reference coordinates given as
\begin{verbatim}
           x       y       z
     C    -0.037   0.090   0.000
    H1     0.070   0.090   0.000
    H2    -0.073   0.012   0.064
    H3    -0.073   0.073  -0.100
    H4    -0.073   0.184   0.035
\end{verbatim}

We then enforce the position of the carbon to be fixed by putting a position restraint to it. We fix the inner degrees of freedom by adding a bond constraint to the \texttt{CH} bonds and angle restraints to all possible \texttt{HCH} angles, fixing them to \texttt{109.47122°}.

We then run an OpenMM simulation using a Langevin integrator at 100K. We chose a time step of 1ps and only keep each 500th frame as a sample to ensure proper mixing. This results in a dataset of 50,000 samples.

\paragraph{Flow model}

\begin{figure}[t]
    \centering
\includegraphics{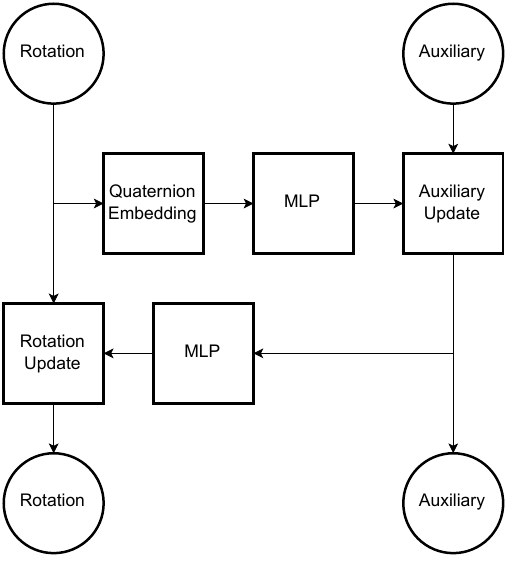}
    \caption{The coupling used for the tetrahedron experiment.}
    \label{fig:toy-model-flow}
    
\end{figure}

We use augmented normalizing flows \cite{huang2020augmented, chen2020vflow} and add auxiliary noise dimensions to our data as otherwise our dataset would only consist of one quaternion and as such could not be modeled with coupling layers. 
We use a two-dimensional unit normal distribution to model the auxiliary noise in data and latent space. We furthermore use an uninformed Von-Mises-Fisher (VMF) density with concentration parameter $2.5$ as base density for the rotations. To satisfy flip-invariance, we model this base density as a mixture of the location and its antipode. We then set up a two-layer coupling flow, coupling the rotation degree of freedom and the noise as depicted in Fig. \ref{fig:toy-model-flow}.

The conditioning functions producing the parameters of the flows are simple two-layer dense nets with 128 hidden units and GELU activation \cite{Hendrycks2016GaussianEL}. To ensure flip-invariance for the parameters of the auxiliary transformation we embed the conditioning quaternions using the following self-attention mechanism before feeding them into the dense nets (see Fig. \ref{fig:quat-embedding}):
\begin{itemize}
    \item Let $S \colon \mathbb{R}^{4} \rightarrow \mathbb{R}$ and $F \colon \mathbb{R}^{4} \rightarrow \mathbb{R}^{H}$ be linear layers where $H$ is some embedding dimension.
    \item Then we compute the quaternion embedding as
    \begin{align}
        G (\bm q) = \sum_{s \in \{-1, 1\}} \frac{\exp(S(s\cdot \bm q))}{\exp(S(s\cdot \bm q)) + \exp(S(-s\cdot \bm q))} \cdot  F(s \cdot \bm q)
    \end{align}
\end{itemize}

\begin{figure}
    \centering
    \includegraphics{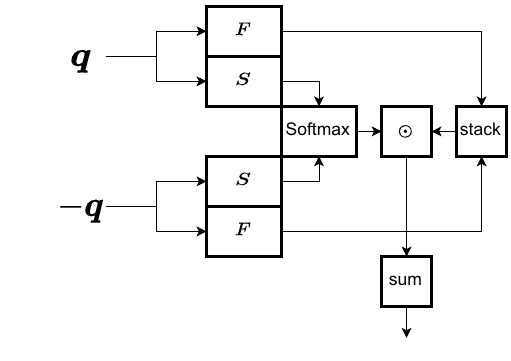}
    \caption{The flip-invariant embedding is used for the quaternions before feeding them into conditioning NN. The layers $F$ and $S$ are shared over inputs.}
    \label{fig:quat-embedding}
\end{figure}

For the auxiliary transformations we rely on simple real-NVP blocks \cite{dinh2016rnvp}. For the rotation transformation, we tried the following setups:
\begin{itemize}
    \item the affine flows of \citet{liu2023delving}.
    \item the symmetrized Moebius layers as presented in this work.
    \item three variants of the projective convex gradient maps as presented in this work using the potential defined in Eq. \ref{fig:gradient-map-icnn} setting $H=8, 32,$ and $ 128$ respectively.
\end{itemize}

Fig. \ref{fig:tetrahedron-result} in the main text shows the result for the convex potential with $H=128$. We show a comprehensive ablation of how the quality degrades when varying $H$ in figure \ref{fig:tetrahedron-result-extended}.\label{a:methane_param}

\begin{figure}[t]
    \centering
\includegraphics[width=0.9\linewidth]{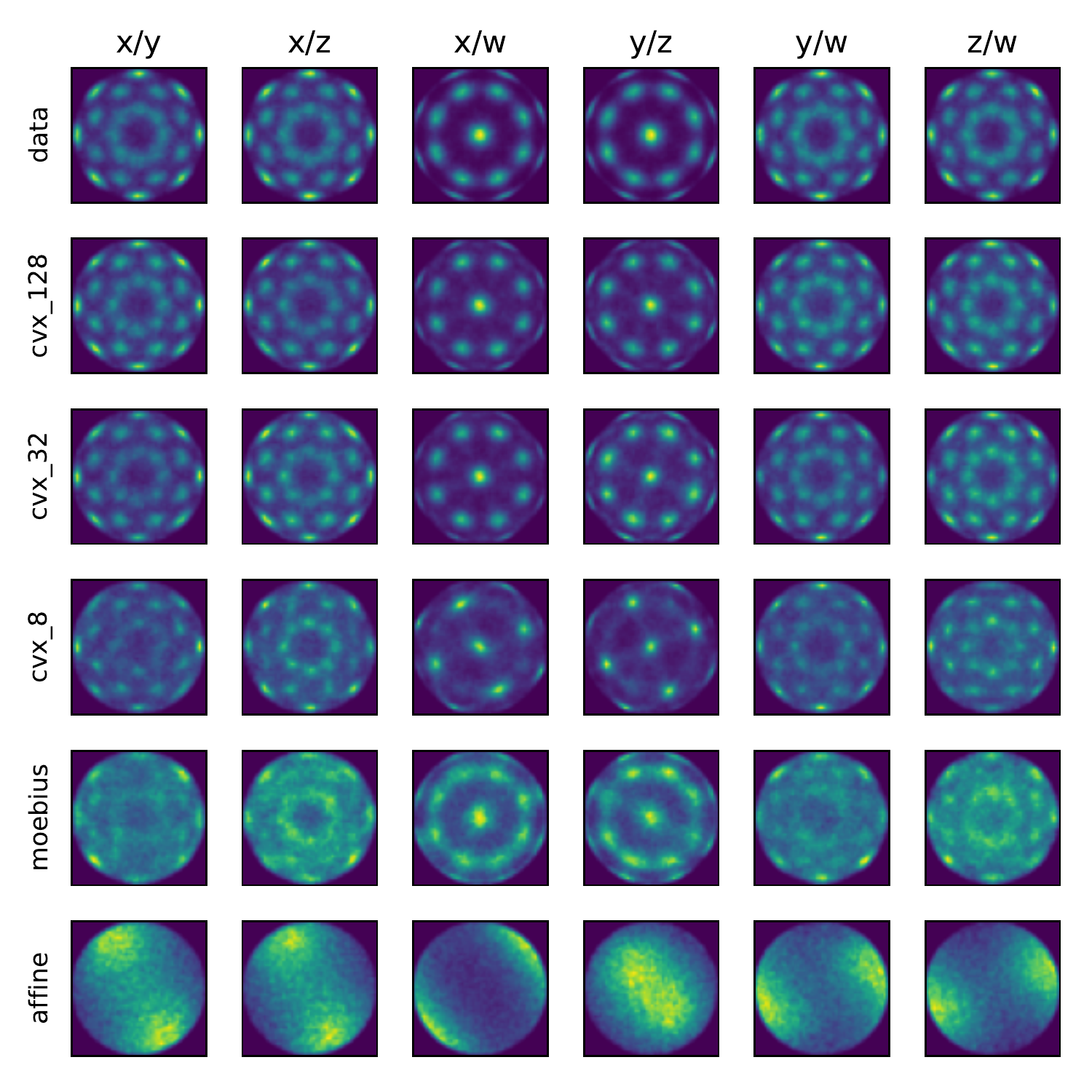}
    \caption{Full ablation for the methane experiment: in addition to the results shown in Fig. \ref{fig:tetrahedron-result} we show how the performance of the projective convex gradient map degrades by varying $H$ from $H=8$ to $H=128$.}
    \label{fig:tetrahedron-result-extended}
\end{figure}

\paragraph{Training}

By augmenting the data $\mu(\bm q)$ distribution with auxiliary noise $\bm z$, we obtain a new \emph{augmented data} density
\begin{align}
    \mu(\bm q, \bm z) = \mu(\bm q) \cdot \mathcal{N}\left(\bm z|\bm 0, \bm I\right).
\end{align}

To match dimensionality, we furthmore have to augment the base density as well, giving us
\begin{align}
    \mu_{0}(\bm q, \bm z) = \text{VMF}(\bm e_{0}, \kappa) \cdot \mathcal{N}\left(\bm z|\bm 0, \bm I\right).
\end{align}
Here $\bm e_{0} = (1, 0, 0, 0)$ and $\kappa = 2.5$.

Then the likelihood objective in eq. \ref{eq:maximum-likelihood} transforms into

\begin{align}
    \bm \theta_{ML} = \arg\min_{\bm \theta} \mathbb{E}_{\bm q, \bm z \sim \mu(\bm q, \bm z)}\left[ - \log \Phi(\mu_{0})(\bm q, \bm z; \bm \theta) \right].
\end{align}

We optimize this objective for $50,000$ steps for each candidate flow.
We used the ADAM optimizer \cite{kingma2014adam} with a batch size of $32$ and a learning rate of $0.0005$.

\section{Details on the Ice XI experiments}
We considered 3 different setups (see also Fig.~\ref{fig:ice-results}), in each case the temperature of the base distribution was T$_0$ 250~K.
We varied the number of water molecules $N$ and the temperature of the target distribution $T$:
\begin{itemize}
    \item[a)] N = 16, T = 100~K
    \item[b)] N = 16, T = 50~K
    \item[c)] N = 128, T = 100~K
\end{itemize}
Setup (a) and (b) use the same base distribution.

\paragraph{Details of the simulations and data generation process}
The initial configuration of ice XI was generated with the GenIce2 software \cite{Matsumoto2021}, using a single cell for the $N=16$ system and two cells per dimension for the $N=128$ one. We did not change the default size of the generated box.
We run molecular dynamics simulations with the OpenMM library \cite{eastman2017openmm} using the TIP4P-Ew rigid water force-field \cite{Horn2004} (cutoff length half the smaller box edge with a switching function for smooth interactions), and Langevin middle integrator \cite{Zhang2019a} with integration step of 1 fs and friction coefficient of 1 ps$^{-1}$. 
We chose the TIP4P-Ew water model because it is readily available in OpenMM, but similar results can be obtained with other rigid water models, such as TIP4P-ice \cite{Abascal2005}.
We run iterations of 500 MD steps and store a molecular configuration at each iteration.
All our MD simulations start with an equilibration run of 10,000 iterations, which is then discarded.

To generate the data used for training we run a MD simulation of 10,000 iterations for each of the two base distributions, and another 10,000 iterations MD for each of the two evaluation sets.

We note that ice XI is likely only metastable at the thermodynamic conditions that we consider, however, it is stable enough that we can run long MD simulations without observing any phase transitions, and thus it is perfectly suitable for our purposes.

\paragraph{Flow model}\label{a:ice_flow}
For all three experiments, we used the same coupling architecture, where we couple positions and rotations in a round-robin way over four iterations (see Fig. \ref{fig:ice-flow-model}).

The system is invariant with respect to global translations, thus the number of degrees of freedom associated to the positions of the molecules is not $3N$ but $3(N-1)$.
To account for this, we fix the position of one of the molecules (but not the rotation) and apply the flow only to the remaining $N-1$  \cite{wirnsberger2022}.

To share parameters and reuse local substructure, we model the conditioning networks with transformers \cite{vaswani2017attention} using multi-headed self-attention (see Figs. \ref{fig:conditioning-transformer}, \ref{fig:transformer-block}).
Since atoms in the crystal phase have well-specified positions we do not need to enforce strict permutation symmetry, e.g., as you would need to do when studying liquids.
Similar to positional encoding in language models, we encode the molecule index as a one-hot vector before feeding it into the first transformer block.
As we furthermore have to guarantee flip-invariance for the position conditioner, we use a modified attention mechanism where we stack rotations and the features of the last layer into different heads and then run the rotation logits through a square function to cancel its sign (see Fig. \ref{fig:attention-mechanism} a)).
In all experiments, we use two transformer blocks per flow layer each using 8 heads and 32 channels.

For updating the positions conditioned on the rotations we were using Moebius layers \cite{rezende2020normalizing}. For updating the rotations conditioned on the positions we used the symmetrized Moebius transforms described in the main text.

The flow is initialized to be the approximately the identity, so that at the beginning of training the mapped configurations have reasonable energies.
This is obtained by multiplying all the parameters of the coupling layers with a sigmoid function whose arguments are also learnable parameters.

The total number of trainable parameters is $290,896$ for system (a) and (b), and $7,458,896$ for system (c).

\begin{figure}[t]
    \centering
    \includegraphics{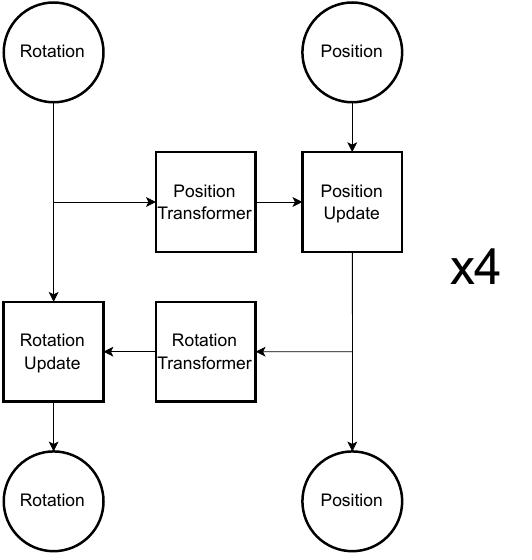}
    \caption{The flow architecture used for the ice XI experiments. The coupling layers are repeated four times.}
    \label{fig:ice-flow-model}
\end{figure}

\begin{figure}
    \centering
    \includegraphics{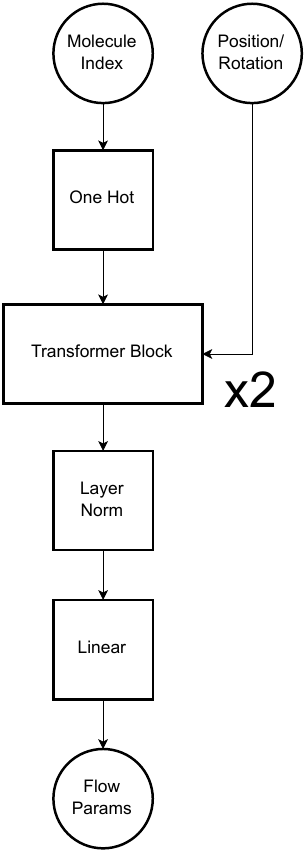}
    \caption{The conditioning layers for both the position and rotation conditioner. We use positional encoding and first embed the molecule index as a one-hot vector. Then we proceed with a stack of two transformer blocks. Finally, we decode the output into the flow parameters.}
    \label{fig:conditioning-transformer}
\end{figure}

\begin{figure}[t]
    \centering
    \includegraphics{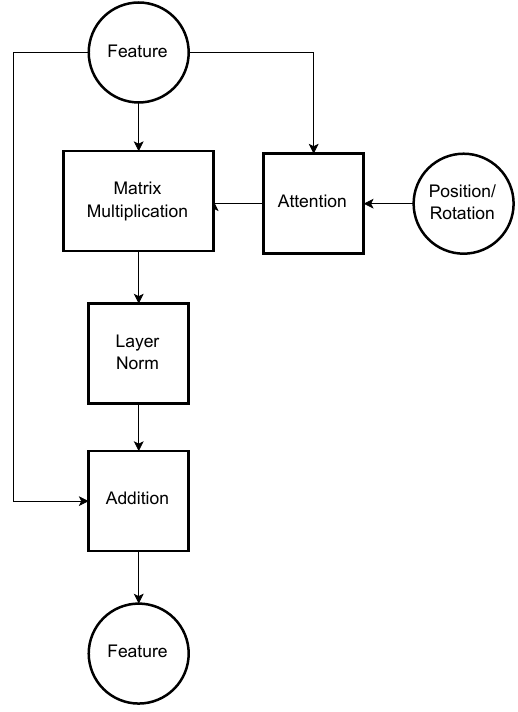}
    \caption{The transformer blocks follow the same structure for both positions and rotations and only differ in the attention mechanism used. As in usual transformer models, the features of the last layer together with current positions/rotations determine the self-attention matrix. We then multiply the last features with the computed attention matrix and add the result to the features of the last layer.}
    \label{fig:transformer-block}
\end{figure}

\begin{figure}[t]
    \centering
    \includegraphics{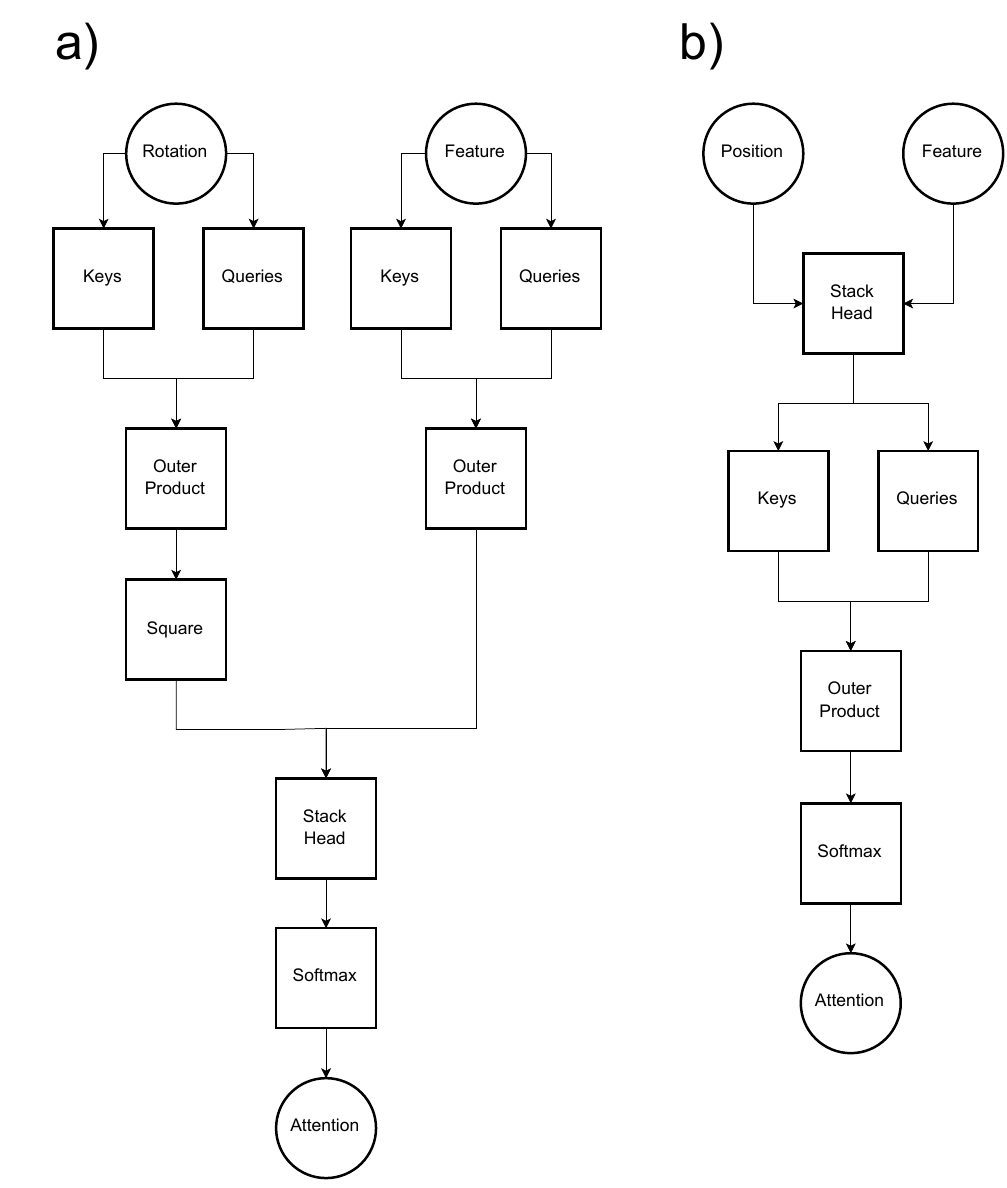}
    \caption{The attention mechanisms used within the transformer blocks. a) The attention mechanism for the position transformer: to preserve flip-invariance, we separate the information flow from rotations and features into separate heads. Then we apply a square function on the rotation logits to cancel the sign. Finally, we can stack along the head dimension and proceed with the softmax as usual. b) The  attention mechanism for the rotation transform boils down to a usual self-attention block, where we first concatenate features and positions before sending them through key and query blocks.}
    \label{fig:attention-mechanism}
\end{figure}

\paragraph{Training}

We train each model using ADAM \cite{kingma2014adam} for 1000 iterations per epoch over 10 epochs. We use a batch size of 32 and a cosine scheduler for the learning rate, which annealed the learning rate from 1e-3 in the first epoch to 1e-5 in the final epoch.
We used the same training setup for all the models.

\paragraph{Details on the evaluation}
Once the flow is trained, we apply it to the evaluation dataset to obtain the potential energy profiles and the oxygen-oxygen radial distribution functions shown in Fig.~\ref{fig:ice-results}.
To estimate the uncertainty for the LFEP estimator we use the bootstrapping method, and compute it 10 times by subsampling the evaluation data from the base distribution.

Similarly, we estimate the Kish effective sampling size \cite{kish1965} of the generated samples from 10,000 random points of the evaluation dataset.
The obtained averages and standard deviations over 10 estimates are: (a) $22.74 \pm 1.58\%$, (b) $6.90 \pm 1.36\%$, (c) $0.39 \pm 0.06 \%$.

The data for the reference MBAR $\Delta F$ calculations are obtained by running 10,000 MD iterations at various intermediate temperatures between the base and the target.
In total, we performed 5 MD runs for setup (a), 10 for (b), and 10 for (c).
The temperature ladder follows a geometric distribution.
The uncertainty over the MBAR calculations is estimated with bootstrapping, using the \texttt{pymbar} package (\url{https://github.com/choderalab/pymbar}).

Examples of the sampled atomistic configurations are shown in Fig.~\ref{fig:ice-config}, as 2D projections.

\paragraph{Computational cost} \label{a:ice_cost}
We did not optimise the computational efficiency of either our method or the MBAR reference, as this was not the aim of this work.
We expect the use of normalizing flows to bring a clear speed-up over more traditional MD methods, in cases where energy evaluation is more expensive, such as with interatomic potentials based on neural networks or on higher levels of theory.
In our experiment, the number of energy evaluations used for the MBAR estimate of (b) and (c) is $10  \times 10,000\times 500 = 50,000,000$, while for the LFEP estimate is $2 \times 10,000 \times 500 = 10,000,000$ for the MD sampling of the base distribution, plus $10 \times 1000 \times 32 = 320,000$ for the training.
Training on a GeForce GTX 1080 Ti took about 10 minutes for each of the small systems and 30 minutes for the larger one.

\begin{figure*}[t]
    \centering
    \includegraphics[width=\textwidth]{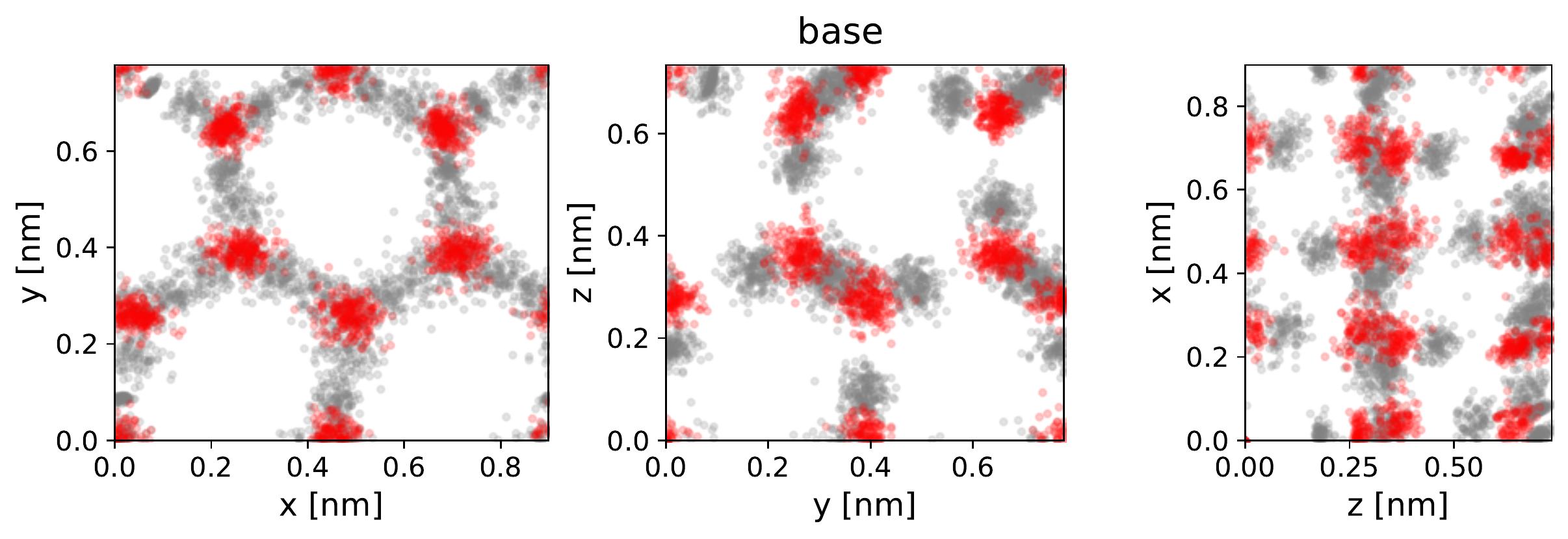}
    \includegraphics[width=\textwidth]{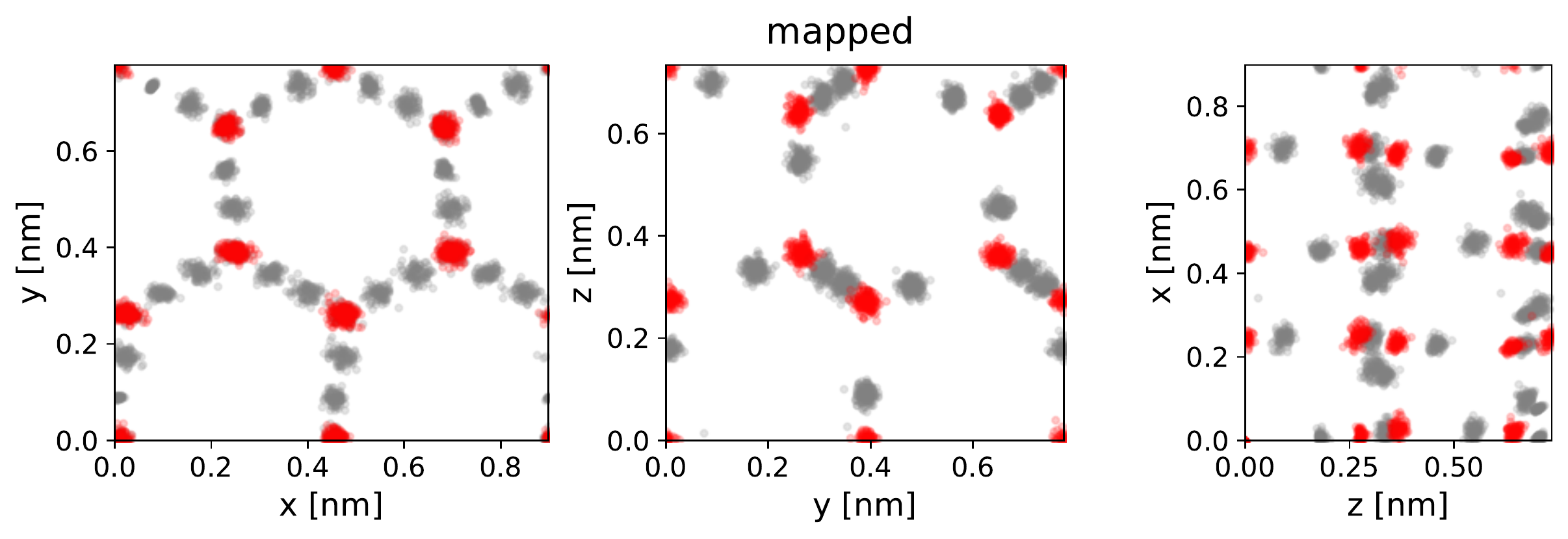}
    \includegraphics[width=\textwidth]{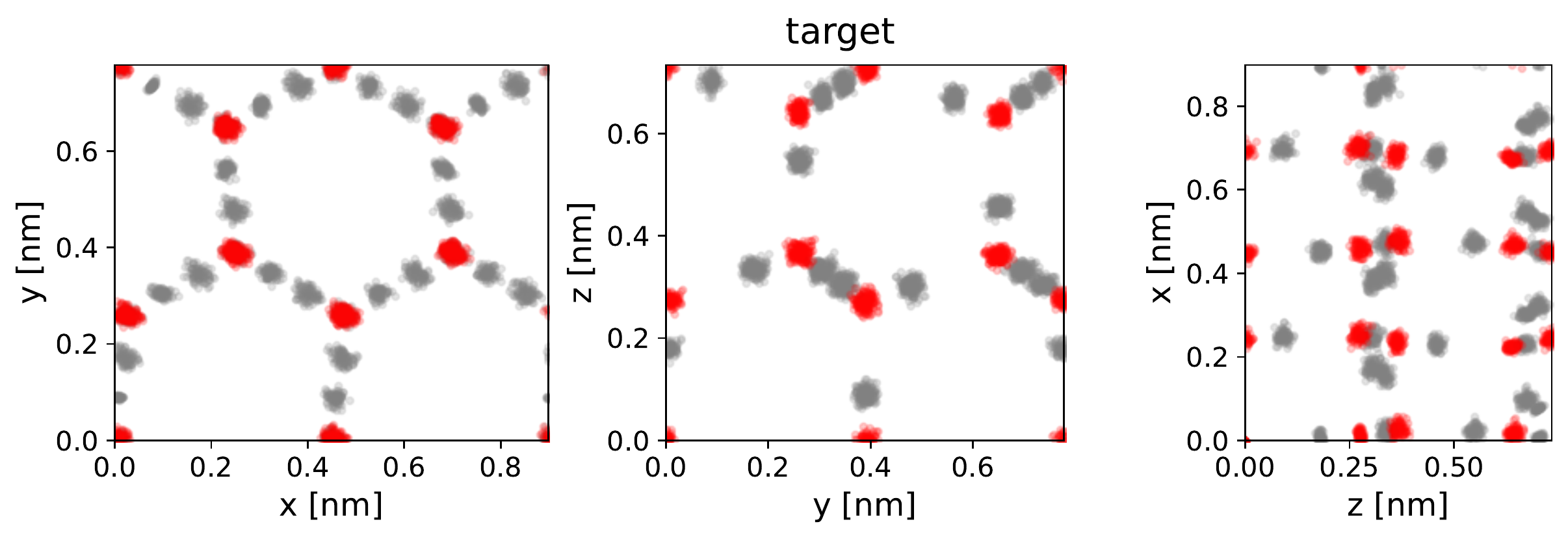}
    \caption{Atomistic configurations for the case of N = 16. 
    Each row shows the 2D projection of 100 different samples, with the oxygens colored in red and the hydrogens in gray.
    The first row contains samples from the base distribution at T$_0$ = 250~K, the second row shows how those configurations are mapped to the target by the NF, and the third row shows MD samples from the target distribution at T = 50~K. As the flows were purely trained on energies, these target samples were not seen during the training.
    }
    \label{fig:ice-config}
\end{figure*}


\end{document}